
\documentclass{article}

\usepackage{times}
\usepackage{graphicx} 
\usepackage{subfigure} 
\usepackage{sidecap}

\usepackage[latin1]{inputenc}
\usepackage{amsmath}
\usepackage{amsfonts}
\usepackage{amssymb}

\usepackage{bbm}

\usepackage{natbib}

\usepackage{algorithm}
\usepackage{algorithmic}

\usepackage{url}



\usepackage[accepted]{arxiv}

\newtheorem{theorem}{Theorem}

\newtheorem{proposition}[theorem]{\bf{Proposition}}




\newcommand{\vect}[1]{\mathbf{#1}}
\newcommand{\set}[1]{\mathbb{#1}}
\newcommand{\sett}[1]{\mathcal{#1}}
\newcommand{\matr}[1]{\mathbf{#1}}
\newcommand{\vectsymb}[1]{\boldsymbol{#1}}

\newcommand{\DP}[2]{{#1}^{\top}{#2}}

\DeclareMathOperator*{\argmin}{argmin}



\icmltitlerunning{From Softmax to Sparsemax: A Sparse Model of Attention and Multi-Label Classification}

\begin{document} 

\twocolumn[
\icmltitle{From Softmax to Sparsemax:\\A Sparse Model of Attention and Multi-Label Classification}


\icmlauthor{Andr\'{e}~F.~T.~Martins$^{\dagger\sharp}$}{andre.martins@unbabel.com}
\icmlauthor{Ram\'{o}n F. Astudillo$^{\dagger\diamond}$}{ramon@unbabel.com}
\icmladdress{$^{\dagger}$Unbabel Lda, Rua Visconde de Santar\'em, 67-B, 1000-286 Lisboa, Portugal\\ $^{\sharp}$Instituto de Telecomunica\c{c}\~oes (IT), Instituto Superior T\'ecnico, Av. Rovisco Pais, 1, 1049-001 Lisboa, Portugal\\$^{\diamond}$Instituto de Engenharia de Sistemas e Computadores (INESC-ID), Rua Alves Redol, 9, 1000-029 Lisboa, Portugal}

\icmlkeywords{boring formatting information, machine learning, ICML}

\vskip 0.3in
]

\begin{abstract} 
We propose sparsemax, a new activation function similar to the traditional softmax, but able to output sparse probabilities. 
After deriving its properties, we show how its Jacobian can be efficiently computed, 
enabling its use in a network trained with backpropagation. 
Then, we propose a new smooth and convex loss function which is the sparsemax analogue of the logistic loss. 
We reveal an unexpected connection between this new loss and the 
Huber classification loss. 
We obtain promising empirical results in multi-label classification problems and in attention-based neural networks for natural language inference. For the latter, we achieve a similar performance as the traditional softmax, but with a 
selective, more compact, attention focus.
\end{abstract} 

\section{Introduction}

The softmax transformation is a key component of several statistical learning models, 
encompassing
multinomial logistic regression \citep{McCullagh1989}, 
action selection in reinforcement learning \citep{Sutton1998}, 
and neural networks for multi-class classification \citep{Bridle1990,Goodfellow2016}.  
Recently, 
it 
has also been used 
to design attention mechanisms in neural networks, 
with 
important achievements in 
machine translation \citep{Bahdanau2015}, image caption generation \citep{Xu2015}, 
speech recognition \citep{Chorowski2015}, 
memory networks \citep{Sukhbaatar2015},
and various tasks in natural language understanding \citep{Hermann2015,Rocktaschel2015,Rush2015EMNLP} 
and computation learning \citep{Graves2014,Grefenstette2015}. 

There are a number of reasons why the softmax transformation is so appealing.  
It is simple to evaluate and differentiate, 
and it can be turned into the 
(convex) negative log-likelihood loss function by taking the logarithm of its output. 
Alternatives proposed in the literature, such as the 
Bradley-Terry model 
\citep{Bradley1952,
Zadrozny2001,Menke2008}, 
the multinomial probit \citep{Albert1993}, 
the spherical softmax \citep{Ollivier2013,Vincent2014,Brebisson2015}, 
or softmax approximations \citep{Bouchard2007}, 
while theoretically or computationally advantageous for certain scenarios, 
lack some of the convenient properties of softmax. 

In this paper, we 
propose 
the {\textbf{sparsemax transformation}}. 
Sparsemax 
has the distinctive feature that it can return sparse posterior distributions, 
that is, it may assign exactly zero probability to some of its output variables. 
This property makes it appealing to be used as a filter for large output spaces, to predict multiple labels, or as a 
component to identify which of a group of variables are potentially relevant for a decision,   
making the model more interpretable.  
Crucially, this is done while preserving most of the attractive properties of softmax: we show that sparsemax is also simple to evaluate, it is even cheaper to differentiate, 
and that it can be turned into a convex loss function. 

To sum up, our contributions are as follows:
\begin{itemize}
\item We formalize the new sparsemax transformation, derive its properties, and show how it can be efficiently computed (\S\ref{sec:sparsemax_definition}--\ref{sec:sparsemax_properties}). We show that in the binary case sparsemax reduces to a hard sigmoid (\S\ref{sec:sparsemax_2D_3D}).
\item We derive the Jacobian of sparsemax, comparing it to the softmax case, 
and show that it can lead to faster gradient backpropagation (\S\ref{sec:sparsemax_jacobian}).
\item We propose the {\textbf{sparsemax loss}}, a new loss function that is the sparsemax analogue of logistic regression (\S\ref{sec:sparsemax_loss}). We show that it is convex, everywhere differentiable, 
and can be regarded as a
multi-class generalization of the Huber classification loss, an important tool in robust statistics \citep{Huber1964,Zhang2004}.
\item We apply the sparsemax loss to train {multi-label} linear classifiers (which predict a \emph{set} of labels instead of a single label) on benchmark datasets (\S\ref{sec:experiments_simulations}--\ref{sec:experiments_multilabel}).
\item Finally, we devise a neural selective attention mechanism using the sparsemax transformation, 
evaluating its performance on a natural language inference problem, 
with encouraging results (\S\ref{sec:experiments_attention}).
\end{itemize}

%

\section{The Sparsemax Transformation}\label{sec:sparsemax}

\subsection{Definition}\label{sec:sparsemax_definition}

Let $\Delta^{K-1} := \{\vectsymb{p}\in \mathbb{R}^K \,\,|\,\, \DP{\vect{1}}{\vectsymb{p}} = 1, \,\, \vectsymb{p}\ge \vect{0}\}$ be 
the $(K-1)$-dimensional simplex. 
We are interested in functions that map vectors in $\mathbb{R}^K$ to probability distributions in $\Delta^{K-1}$. 
Such functions are useful for converting a vector of real weights (\emph{e.g.}, label scores) to a probability distribution (\emph{e.g.} posterior probabilities of labels). 
The classical example is the \textbf{softmax function}, 
defined componentwise as:
\begin{equation}\label{eq:softmax}
\mathrm{softmax}_i(\vectsymb{z}) = \frac{\exp(z_i)}{\sum_{j} \exp(z_j)}.
\end{equation} 
A limitation of the softmax transformation is that 
the resulting probability distribution always has full support, \emph{i.e.}, 
$\mathrm{softmax}_i(\vectsymb{z}) \ne 0$ for every $\vectsymb{z}$ and $i$. 
This is a disadvantage in applications where 
a sparse probability distribution is desired, in 
which case it is common to define a threshold 
below which small probability values are truncated to zero. 

In this paper, we propose as an alternative the following  transformation, which we call \textbf{sparsemax}:
\begin{equation}\label{eq:sparsemax}
\mathrm{sparsemax}(\vectsymb{z}) := \argmin_{\vectsymb{p} \in \Delta^{K-1}} \|\vectsymb{p} - \vectsymb{z}\|^2.
\end{equation}
In words, sparsemax returns the Euclidean projection of the input vector $\vectsymb{z}$ onto the probability simplex. This projection is likely to hit the boundary of the simplex, in which case $\mathrm{sparsemax}(\vectsymb{z})$ becomes sparse. We will see that sparsemax retains most of the important properties of softmax, having in addition the ability of producing 
sparse distributions. 

\subsection{Closed-Form Solution}\label{sec:sparsemax_closedform}

Projecting onto the simplex is a well studied problem, 
for which linear-time algorithms are available     
\citep{Michelot1986,Pardalos1990,Duchi2008}. 
We start by recalling the well-known result that 
such projections correspond to a {soft-thresholding} operation. 
Below, we use the notation $[K] := \{1,\ldots,K\}$ and $[t]_+ := \max\{0, t\}$. 
\begin{proposition}\label{prop:sparsemax_solution} 
The solution of Eq.~\ref{eq:sparsemax} is of the form: 
\begin{equation}\label{eq:sparsemax_closedform}
\mathrm{sparsemax}_i(\vectsymb{z}) = [z_i - \tau(\vectsymb{z})]_+,
\end{equation}
where $\tau : \mathbb{R}^K \rightarrow \mathbb{R}$ is the (unique) function that satisfies 
$\sum_j [z_j - \tau(\vectsymb{z})]_+ = 1$ for every $\vectsymb{z}$. 
Furthermore, $\tau$ 
can be expressed as follows. 
Let $z_{(1)} \ge z_{(2)} \ge \ldots \ge z_{(K)}$ be the sorted coordinates of $\vectsymb{z}$, and define 
$k(\vectsymb{z}) := \max\left\{k \in [K]\,\,|\,\, 1+kz_{(k)} > \sum_{j\le k} z_{(j)}\right\}$.
Then,
\begin{eqnarray}\label{eq:threshold_closedform}
\tau(\vectsymb{z}) = \frac{\left(\sum_{j\le k(\vectsymb{z})} z_{(j)}\right) - 1}{k(\vectsymb{z})}
= 
\frac{\left(\sum_{j \in S(\vectsymb{z})} z_j\right) - 1}{|S(\vectsymb{z})|},
\end{eqnarray}
where $S(\vectsymb{z}) := \{j \in [K]\,\,|\,\, \mathrm{sparsemax}_j(\vectsymb{z}) > 0\}$ is the support of 
$\mathrm{sparsemax}(\vectsymb{z})$.
\end{proposition}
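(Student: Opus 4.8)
The plan is to treat Eq.~\ref{eq:sparsemax} as a convex quadratic program and read off its structure from the KKT conditions. Since $\tfrac{1}{2}\|\vectsymb{p}-\vectsymb{z}\|^2$ is strictly convex and $\Delta^{K-1}$ is nonempty, compact, and convex, the minimizer exists and is unique, so it suffices to exhibit a single point satisfying the optimality conditions. Introducing a multiplier $\tau \in \mathbb{R}$ for the equality constraint $\DP{\vect{1}}{\vectsymb{p}}=1$ and multipliers $\vectsymb{\mu}\ge\vect{0}$ for $\vectsymb{p}\ge\vect{0}$, stationarity of the Lagrangian gives $p_i = z_i - \tau + \mu_i$, while complementary slackness $\mu_i p_i = 0$ forces, for each $i$, either $p_i = z_i - \tau > 0$ (when $z_i > \tau$) or $p_i = 0$ with $\mu_i = \tau - z_i \ge 0$ (when $z_i \le \tau$). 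Both cases collapse to $p_i = [z_i - \tau]_+$, which is Eq.~\ref{eq:sparsemax_closedform}.

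First I would establish that the threshold is uniquely determined. The multiplier $\tau$ must be chosen so that primal feasibility $\DP{\vect{1}}{\vectsymb{p}} = 1$ holds, i.e. $g(\tau) := \sum_j [z_j - \tau]_+ = 1$. The map $g$ is continuous and piecewise linear, non-increasing everywhere and strictly decreasing on the region where it is positive, with $g(\tau)\to\infty$ as $\tau\to-\infty$ and $g(\tau)\to 0$ as $\tau\to+\infty$; hence the equation $g(\tau)=1$ has exactly one root, which defines $\tau(\vectsymb{z})$ and makes the projection well defined as a function. Next I would derive the value of $\tau$. Because $[z_i - \tau]_+ > 0$ precisely when $z_i > \tau(\vectsymb{z})$, the support $S(\vectsymb{z})$ is a ``top-$k$'' set: if $i \in S(\vectsymb{z})$ and $z_{i'} \ge z_i$ then $i' \in S(\vectsymb{z})$, so $S(\vectsymb{z})$ consists of the $|S(\vectsymb{z})|$ largest coordinates. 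Summing the relation $p_j = z_j - \tau(\vectsymb{z})$ over $j \in S(\vectsymb{z})$ and using $\sum_{j\in S(\vectsymb{z})} p_j = 1$ yields $\tau(\vectsymb{z}) = \big(\sum_{j\in S(\vectsymb{z})} z_j - 1\big)/|S(\vectsymb{z})|$, the right-hand equality in Eq.~\ref{eq:threshold_closedform}.

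The remaining and most delicate step is to identify $|S(\vectsymb{z})|$ with the combinatorial quantity $k(\vectsymb{z})$, so that the support can be found by sorting alone, without knowing $\tau$ in advance. Writing $k^\star := |S(\vectsymb{z})|$ and substituting the formula for $\tau(\vectsymb{z})$ into the inequality $z_{(k^\star)} > \tau(\vectsymb{z})$ (valid since the smallest supported coordinate $(k^\star)$ lies in $S(\vectsymb{z})$) gives $1 + k^\star z_{(k^\star)} > \sum_{j\le k^\star} z_{(j)}$, so $k^\star$ belongs to the set maximized in the definition of $k(\vectsymb{z})$. To show it is the largest such index, I would exploit monotonicity of $c(k) := 1 + k z_{(k)} - \sum_{j\le k} z_{(j)}$: a telescoping computation gives $c(k+1) - c(k) = k\,(z_{(k+1)} - z_{(k)}) \le 0$, so $\{k : c(k) > 0\}$ is a prefix $\{1,\ldots,k(\vectsymb{z})\}$. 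It then remains to rule out $c(k^\star + 1) > 0$. This follows because $z_{(k^\star+1)} \le \tau(\vectsymb{z})$ makes the candidate threshold $\tau_{k^\star+1} = \big(k^\star\tau(\vectsymb{z}) + z_{(k^\star+1)}\big)/(k^\star+1)$ a weighted average of $\tau(\vectsymb{z})$ and $z_{(k^\star+1)}$, hence $\tau_{k^\star+1} \ge z_{(k^\star+1)}$, i.e. $c(k^\star+1) \le 0$. Therefore $k(\vectsymb{z}) = k^\star = |S(\vectsymb{z})|$, which gives the left-hand equality in Eq.~\ref{eq:threshold_closedform} and completes the proof.

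I expect the KKT derivation and the uniqueness of $\tau$ to be routine; the main obstacle is the final bookkeeping that reconciles the two characterizations of the support size, namely the threshold-based definition of $S(\vectsymb{z})$ and the purely order-statistic definition of $k(\vectsymb{z})$. The clean way through it is the observation that $c(k)$ is non-increasing in $k$, which reduces the problem to checking the single boundary index $k^\star+1$.
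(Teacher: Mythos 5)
Your proof is correct and follows essentially the same route as the paper: a KKT analysis of the projection QP yielding the soft-thresholding form, followed by summing over the support to get the threshold value. The one place you go beyond the paper is the identification $k(\vectsymb{z})=|S(\vectsymb{z})|$, which the paper asserts in a single sentence after observing $z_i>\tau^*$ on the support and $z_i\le\tau^*$ off it, whereas you make the argument explicit via the monotonicity of $c(k):=1+kz_{(k)}-\sum_{j\le k}z_{(j)}$ and the weighted-average bound on $\tau_{k^\star+1}$ --- a worthwhile addition, since that step is the only non-routine part of the proposition.
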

\begin{proof}
See App.~\ref{sec:proof_threshold_closedform} in the supplemental material. 
\end{proof}

In essence, Prop.~\ref{prop:sparsemax_solution} states that all we need for evaluating the sparsemax transformation is to compute the threshold $\tau(\vectsymb{z})$; 
all coordinates above this threshold (the ones in the set $S(\vectsymb{z})$) will be shifted by this amount, 
and the others will be truncated to zero.  
We call $\tau$ in Eq.~\ref{eq:threshold_closedform} 
the \textbf{threshold function}. This piecewise linear function will play an important role in the sequel. 
Alg.~\ref{alg:projontosimplex} illustrates a na\"ive $O(K\log K)$ algorithm that uses 
Prop.~\ref{prop:sparsemax_solution} for evaluating the sparsemax.%
\footnote{More elaborate $O(K)$ algorithms exist based on linear-time selection \citep{Blum1973,Pardalos1990}.} %
\begin{algorithm}[t]
   \caption{Sparsemax Evaluation}
\begin{algorithmic}\label{alg:projontosimplex}
   \STATE {\bfseries Input:} $\vectsymb{z}$
   \STATE Sort $\vectsymb{z}$ as $z_{(1)} \ge \ldots \ge z_{(K)}$
   \STATE Find $k(\vectsymb{z}) := \max\left\{k \in [K]\,\,|\,\, 1+kz_{(k)} > \sum_{j\le k} z_{(j)}\right\}$
   \STATE Define $\tau(\vectsymb{z}) = \frac{\left(\sum_{j\le k(\vectsymb{z})} z_{(j)}\right) - 1}{k(\vectsymb{z})}$
   \STATE {\bfseries Output:} $\vectsymb{p}$ s.t. $p_i = [z_{i} - \tau(\vectsymb{z})]_+$.
\end{algorithmic}
\end{algorithm}

\begin{figure*}[t]
\centering
\includegraphics[width=0.27\textwidth]{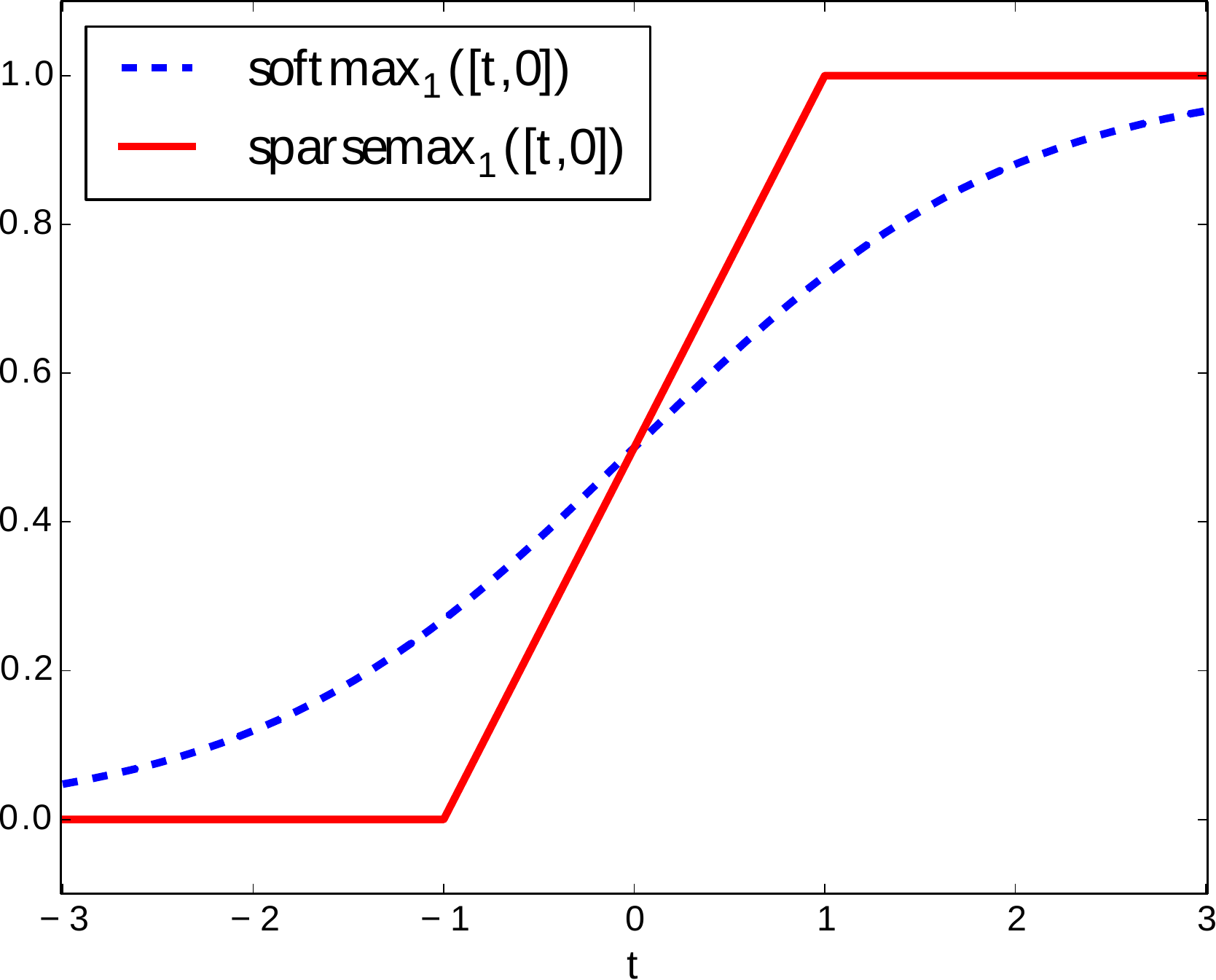}
\includegraphics[width=0.65\textwidth]{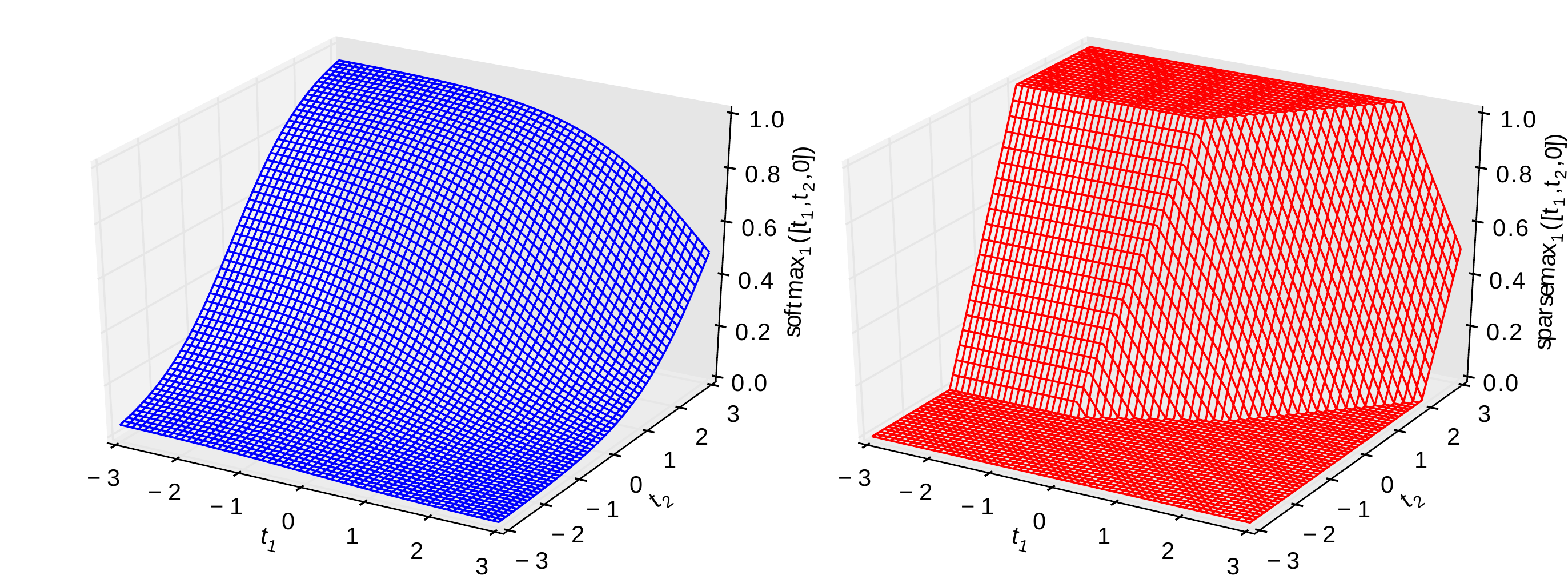}
\caption{Comparison of softmax and sparsemax in 2D (left) and 3D (two righmost plots).\label{fig:softmax_sparsemax_2d_3d}}
\end{figure*}

\subsection{Basic Properties}\label{sec:sparsemax_properties}

We now highlight some properties that are common to softmax and sparsemax. 
Let $z_{(1)} := \max_k z_k$, and denote by $A(\vectsymb{z}) := \{k \in [K]\,\,|\,\, z_k = z_{(1)}\}$ the set of maximal components of $\vectsymb{z}$. We define 
the indicator vector $\mathbbm{1}_{A(\vectsymb{z})}$, whose $k$th component is $1$ if $k\in A(\vectsymb{z})$, and $0$ otherwise. We further denote by $\gamma(\vectsymb{z}) := z_{(1)} - \max_{k\notin A(\vectsymb{z})} z_k$ the gap between the maximal components of $\vectsymb{z}$ and the second largest. 
We let $\vect{0}$ and $\vect{1}$ be vectors of zeros and ones, respectively.


\begin{proposition}\label{prop:sparsemax_softmax_properties} 
The following properties hold for $\rho \in \{\mathrm{softmax}, \mathrm{sparsemax}\}$.
\begin{enumerate}
\item $\rho(\vect{0}) = \vect{1}/K$ and $\lim_{\epsilon\rightarrow 0^+}\rho(\epsilon^{-1}\vectsymb{z}) = \mathbbm{1}_{A(\vectsymb{z})} / |A(\vectsymb{z})|$ (uniform distribution, and distribution peaked on the maximal components of $\vectsymb{z}$, respectively). 
For sparsemax, the last equality  holds for any $\epsilon \le \gamma(\vectsymb{z})\cdot |A(\vectsymb{z})|$.  
\item $\rho(\vectsymb{z}) = \rho(\vectsymb{z} + c\vect{1})$, for any $c \in \set{R}$ (\emph{i.e.}, $\rho$  is invariant to adding a constant to each coordinate).
\item $\rho(\matr{P}\vectsymb{z}) = \matr{P}\rho(\vectsymb{z})$ for any permutation matrix $\matr{P}$ (\emph{i.e.}, $\rho$ commutes with permutations).
\item If $z_i\le z_j$, then $0 \le \rho_j(\vectsymb{z})-\rho_i(\vectsymb{z}) \le \eta(z_j - z_i)$, 
where  
$\eta=\frac{1}{2}$ for 
softmax, and $\eta=1$ for sparsemax. 
\end{enumerate} 
\end{proposition}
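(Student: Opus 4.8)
The plan is to handle the four properties one at a time, exploiting the two complementary descriptions of sparsemax available to us: the variational definition $\mathrm{sparsemax}(\vectsymb{z}) = \argmin_{\vectsymb{p}\in\Delta^{K-1}}\|\vectsymb{p}-\vectsymb{z}\|^2$ and the closed form of Prop.~\ref{prop:sparsemax_solution}; for softmax I will simply manipulate its explicit formula. The shift invariance and permutation equivariance are the easiest. For the shift invariance, in the softmax case the factor $\exp(c)$ cancels between numerator and denominator, while for sparsemax I would either observe from Eq.~\ref{eq:threshold_closedform} that $\tau(\vectsymb{z}+c\vect{1}) = \tau(\vectsymb{z})+c$, leaving each $[z_i+c-\tau(\vectsymb{z}+c\vect{1})]_+ = [z_i-\tau(\vectsymb{z})]_+$ unchanged, or argue directly that on the simplex $\|\vectsymb{p}-(\vectsymb{z}+c\vect{1})\|^2$ differs from $\|\vectsymb{p}-\vectsymb{z}\|^2$ only by a term independent of $\vectsymb{p}$, using $\DP{\vect{1}}{\vectsymb{p}}=1$. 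For the permutation equivariance, softmax is immediate from the symmetry of the formula, and for sparsemax I would use that $\|\matr{P}\vectsymb{p}-\matr{P}\vectsymb{z}\| = \|\vectsymb{p}-\vectsymb{z}\|$ and that $\matr{P}$ maps $\Delta^{K-1}$ onto itself, so the minimizer transforms covariantly.

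For the first property, $\rho(\vect{0}) = \vect{1}/K$ is direct: softmax by its formula, and sparsemax because the defining inequality holds for every $k$ at $\vectsymb{z}=\vect{0}$, giving $k(\vect{0})=K$, hence $\tau=-1/K$ and each coordinate equal to $1/K$. For the limit, in the softmax case I would divide numerator and denominator of $\mathrm{softmax}_i(\epsilon^{-1}\vectsymb{z})$ by $\exp(z_{(1)}/\epsilon)$ and let $\epsilon\to 0^+$, so that every nonmaximal exponent vanishes and we are left with $1/|A(\vectsymb{z})|$ on $A(\vectsymb{z})$ and $0$ elsewhere. The sparsemax statement is stronger, asserting exact equality for all $\epsilon \le \gamma(\vectsymb{z})\cdot|A(\vectsymb{z})|$, so here I would plug the scaled vector into the inequality $1+kz_{(k)} > \sum_{j\le k}z_{(j)}$: it holds automatically for $k\le|A(\vectsymb{z})|$ because the top coordinates tie, and a short computation shows it first fails at $k=|A(\vectsymb{z})|+1$ precisely when $\epsilon \le \gamma(\vectsymb{z})\cdot|A(\vectsymb{z})|$. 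This pins down $k(\epsilon^{-1}\vectsymb{z}) = |A(\vectsymb{z})|$, fixes the threshold, and forces the support to be exactly $A(\vectsymb{z})$ with value $1/|A(\vectsymb{z})|$.

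The last property is where I expect the real work, and the monotonicity and the sparsemax bound are the warm-up. The lower bound $0\le\rho_j(\vectsymb{z})-\rho_i(\vectsymb{z})$ is just monotonicity: $z_i\le z_j$ gives $\exp(z_i)\le\exp(z_j)$ for softmax, and for sparsemax it follows because $[\cdot]_+$ is nondecreasing. The sparsemax upper bound is then short: writing $\rho_j(\vectsymb{z})-\rho_i(\vectsymb{z}) = [z_j-\tau(\vectsymb{z})]_+-[z_i-\tau(\vectsymb{z})]_+$ and using that $[\cdot]_+$ is $1$-Lipschitz yields $\rho_j(\vectsymb{z})-\rho_i(\vectsymb{z}) \le (z_j-\tau(\vectsymb{z}))-(z_i-\tau(\vectsymb{z})) = z_j-z_i$, giving $\eta=1$.

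The main obstacle is the sharp constant $\eta=\tfrac12$ for softmax. My plan is to reduce to two coordinates: set $a=\exp(z_i)$ and $b=\exp(z_j)$ with $b\ge a$, and $Z=\sum_k\exp(z_k)\ge a+b$, so that $\rho_j(\vectsymb{z})-\rho_i(\vectsymb{z}) = (b-a)/Z \le (b-a)/(a+b)$. Substituting $r = b/a = \exp(z_j-z_i)\ge 1$, it then suffices to prove the scalar inequality $(r-1)/(r+1)\le \tfrac12\log r$, which I would establish by noting that the difference $\psi(r)=\tfrac12\log r-(r-1)/(r+1)$ vanishes at $r=1$ and has nonnegative derivative $\psi'(r)=(r-1)^2/\bigl(2r(r+1)^2\bigr)$ for $r\ge 1$. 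The only subtlety is verifying that replacing $Z$ by its lower bound $a+b$ is legitimate, which holds because $b-a\ge 0$, so shrinking the denominator only increases the difference.
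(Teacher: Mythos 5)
Your proposal is correct and follows essentially the same route as the paper's proof: the same property-by-property reductions, the same shift/permutation arguments, the same determination of $k(\epsilon^{-1}\vectsymb{z})=|A(\vectsymb{z})|$ for $\epsilon\le\gamma(\vectsymb{z})\cdot|A(\vectsymb{z})|$, and, for the softmax bound, the same reduction to two coordinates followed by an equivalent scalar inequality (your $(r-1)/(r+1)\le\tfrac{1}{2}\log r$ is exactly the paper's $\tanh(t)\le t$ under $r=e^{2t}$). The only local difference is that you derive monotonicity and the $\eta=1$ bound for sparsemax directly from the closed form $[z_i-\tau(\vectsymb{z})]_+$ (a common threshold composed with a nondecreasing, $1$-Lipschitz map), which is a cleaner substitute for the paper's exchange-based contradiction and three-way case analysis.
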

\begin{proof}
See App.~\ref{sec:proof_prop_sparsemax_softmax_properties} in the supplemental material. 
\end{proof}

Interpreting $\epsilon$ as a ``temperature parameter,'' the first part of Prop.~\ref{prop:sparsemax_softmax_properties} 
shows that the sparsemax has the same ``zero-temperature limit'' behaviour as 
the softmax, but without the need of making the temperature arbitrarily small. 
%

Prop.~\ref{prop:sparsemax_softmax_properties} is reassuring, since it shows that the sparsemax transformation, 
despite being defined very differently from the softmax, 
has a similar behaviour and preserves the same invariances. 
Note that some of these properties are not satisfied by other proposed replacements of the softmax: 
for example, the spherical softmax \citep{Ollivier2013}, defined as 
$\rho_i(\vectsymb{z}) := z_i^2 / \sum_j z_j^2$, 
does not satisfy properties 2 and 4. 

\if 0
\subsection{Basic Properties}\label{sec:sparsemax_properties}

We now highlight some properties that are common to softmax and sparsemax. 
We denote by $\vectsymb{\delta}_k \in \Delta^{K-1}$ the delta distribution on 
$k$, 
$[{\delta}_k]_j = 1$ if $j=k$, and $0$ otherwise. We let $\vect{0}$ and $\vect{1}$ be vectors of zeros and ones, respectively.

\begin{proposition}\label{prop:sparsemax_softmax_properties} 
The following properties hold for $\rho \in \{\mathrm{softmax}, \mathrm{sparsemax}\}$.
\begin{enumerate}
\item $\rho(\vect{0}) = \vect{1}/K$ and $\lim_{t\rightarrow \infty} \rho(t\vectsymb{\delta}_k) = \vectsymb{\delta}_k$ (uniform and delta distributions, respectively). 
For sparsemax, we have the stronger $\mathrm{sparsemax}(t\vectsymb{\delta}_k) = \vectsymb{\delta}_k$ for all $t\ge 1$. 
\item $\rho(\vectsymb{z}) = \rho(\vectsymb{z} + c\vect{1})$, for any $c \in \set{R}$ (\emph{i.e.}, $\rho$  is invariant to adding a constant to each coordinate).
\item $\rho(\matr{P}\vectsymb{z}) = \matr{P}\rho(\vectsymb{z})$ for any permutation matrix $\matr{P}$ (\emph{i.e.}, $\rho$ commutes with permutations).
\item If $z_i\le z_j$, then $0 \le \rho_j(\vectsymb{z})-\rho_i(\vectsymb{z}) \le \eta(z_j - z_i)$, 
where  
$\eta=\frac{1}{2}$ for 
softmax, and $\eta=1$ for sparsemax. 
\end{enumerate} 
\end{proposition}
\begin{proof}
See App.~\ref{sec:proof_prop_sparsemax_softmax_properties} in the supplemental material. 
\end{proof}

Prop.~\ref{prop:sparsemax_softmax_properties} is reassuring, since it shows that the sparsemax transformation, 
despite being defined very differently from the softmax, 
has a similar behaviour and preserves the same invariances. 
Note that some of these properties are not satisfied by other proposed replacements of the softmax: 
for example, the spherical softmax \citep{Ollivier2013}, defined as 
$\rho_i(\vectsymb{z}) := z_i^2 / \sum_j z_j^2$, 
does not satisfy properties 2 and 4. 
\fi

\subsection{Two and Three-Dimensional Cases}\label{sec:sparsemax_2D_3D}

For the two-class case, 
it is well known that the softmax activation becomes the logistic (sigmoid) function. More precisely, 
if $\vectsymb{z} = (t,0)$, %
then 
$\mathrm{softmax}_1(\vectsymb{z}) = \sigma(t) := {(1+\exp(-t))^{-1}}$. 
We next show that the analogous in sparsemax is the ``hard'' version of the sigmoid. 
In fact, using Prop.~\ref{prop:sparsemax_solution}, Eq.~\ref{eq:threshold_closedform}, we have that, for $\vectsymb{z} = (t,0)$, 
\begin{equation}
\tau(\vectsymb{z}) = \left\{
\begin{array}{ll}
t-1, & \text{if $t>1$} \\
(t-1)/2, & \text{if $-1\le t \le 1$} \\
-1, & \text{if $t<-1$},
\end{array}
\right.
\end{equation}
and therefore
\begin{equation}
\mathrm{sparsemax}_1(\vectsymb{z}) = 
\left\{
\begin{array}{ll}
1, & \text{if $t>1$} \\
(t+1)/2, & \text{if $-1\le t \le 1$} \\
0, & \text{if $t<-1$}.
\end{array}
\right.
\end{equation}
Fig.~\ref{fig:softmax_sparsemax_2d_3d} provides an illustration for the two and three-dimensional cases. 
For the latter, we parameterize  $\vectsymb{z} = (t_1,t_2,0)$ and plot 
$\mathrm{softmax}_1(\vectsymb{z})$ and $\mathrm{sparsemax}_1(\vectsymb{z})$ as a function of $t_1$ and $t_2$.  
We can see that sparsemax is piecewise linear, but asymptotically similar to the softmax.

\subsection{Jacobian of Sparsemax}\label{sec:sparsemax_jacobian}

The Jacobian matrix of a transformation $\rho$, $J_{\rho}(\vectsymb{z}) := [{\partial \rho_i(\vectsymb{z})}/{\partial z_j}]_{i,j}$, 
is of key importance to 
train models with gradient-based optimization. 
We next derive the Jacobian of the sparsemax activation, 
but 
before doing so, let us recall how the Jacobian of the softmax looks like. 
We have
\begin{eqnarray}\label{eq:softmax_gradient}
\!\!\!\!\frac{\partial \mathrm{softmax}_i(\vectsymb{z})}{\partial z_j} 
&\!\!\!\!=\!\!\!\!& \frac{\delta_{ij}e^{z_i}\sum_k e^{z_k} - e^{z_i}e^{z_j}}{\left(\sum_k e^{z_k}\right)^2} \nonumber\\
&\!\!\!\!=\!\!\!\!&\mathrm{softmax}_i(\vectsymb{z}) (\delta_{ij}-\mathrm{softmax}_j(\vectsymb{z})),
\end{eqnarray}
where $\delta_{ij}$ is the Kronecker delta, which evaluates to $1$ if $i=j$ and $0$ otherwise. 
%
Letting $\vectsymb{p} = \mathrm{softmax}(\vectsymb{z})$, 
the full Jacobian can be written in matrix notation as 
\begin{equation}\label{eq:jacobian_softmax}
\matr{J}_{\mathrm{softmax}}(\vectsymb{z}) = \mathbf{Diag}(\vectsymb{p}) -\vectsymb{p} \vectsymb{p}^{\top},
\end{equation} 
where $\mathbf{Diag}(\vectsymb{p})$ is a matrix with $\vectsymb{p}$ in the main diagonal.

Let us now turn to the sparsemax case. 
The first thing to note is that sparsemax is differentiable everywhere except at splitting points $\vectsymb{z}$ where the support set $S(\vectsymb{z})$ changes, \emph{i.e.}, 
where $S(\vectsymb{z}) \ne S(\vectsymb{z}+\epsilon\vectsymb{d})$ for some $\vectsymb{d}$ and infinitesimal $\epsilon$.%
\footnote{For those points, we can take an arbitrary matrix in the set of 
generalized Clarke's Jacobians \citep{Clarke1983}, 
the convex hull of all points of the form $\lim_{t\rightarrow\infty} \matr{J}_{\mathrm{sparsemax}}(\vectsymb{z}_t)$, where $\vectsymb{z}_t\rightarrow \vectsymb{z}$.} 
From Eq.~\ref{eq:sparsemax_closedform}, we have that:
\begin{equation}
\!\!\!\!\frac{\partial \mathrm{sparsemax}_i(\vectsymb{z})}{\partial z_j} = \left\{
\begin{array}{ll}
\delta_{ij} - \frac{\partial \tau(\vectsymb{z})}{\partial z_j},  & \text{if $z_i > \tau(\vectsymb{z})$},\\
0, & \text{if $z_i \le \tau(\vectsymb{z})$.}
\end{array}
\right.
\end{equation}
It remains to compute the gradient of the threshold function $\tau$. 
From Eq.~\ref{eq:threshold_closedform}, we have:
\begin{equation}\label{eq:gradient_tau}
\frac{\partial \tau(\vectsymb{z})}{\partial z_j} = \left\{
\begin{array}{ll}
\frac{1}{|S(\vectsymb{z})|} & \text{if $j \in S(\vectsymb{z})$},\\
0, & \text{if $j \notin S(\vectsymb{z})$}.
\end{array}
\right.
\end{equation}
Note that $j \in S(\vectsymb{z}) \,\, \Leftrightarrow z_j > \tau(\vectsymb{z})$. 
Therefore we obtain:
\begin{equation}\label{eq:sparsemax_gradient}
\!\!\!\!\frac{\partial \mathrm{sparsemax}_i(\vectsymb{z})}{\partial z_j} = \left\{
\begin{array}{ll}
\delta_{ij} - \frac{1}{|S(\vectsymb{z})|}, \!\!\!\! & \text{if $i,j \in S(\vectsymb{z})$},\\
0, & \text{otherwise.}
\end{array}
\right.
\end{equation}
Let $\vectsymb{s}$ be an indicator vector whose 
$i$th entry is $1$ if $i \in S(\vectsymb{z})$, and $0$ otherwise.  
We can write the Jacobian matrix as 
\begin{equation}\label{eq:jacobian_sparsemax}
\matr{J}_{\mathrm{sparsemax}}(\vectsymb{z}) = \mathbf{Diag}(\vectsymb{s}) -\vectsymb{s}\vectsymb{s}^{\top}/{|S(\vectsymb{z})|}.
\end{equation} 
It is instructive to compare Eqs.~\ref{eq:jacobian_softmax} and \ref{eq:jacobian_sparsemax}. 
We may regard {the Jacobian of sparsemax as the Laplacian of a graph whose elements of 
$S(\vectsymb{z})$ are fully connected.} 
To compute it, we only need $S(\vectsymb{z})$, which can be obtained in $O(K)$ time with the same algorithm 
that evaluates the sparsemax. 

Often, \emph{e.g.}, in the gradient backpropagation algorithm, 
it is not necessary to compute the full Jacobian matrix, but only 
the product between the Jacobian and a given vector $\vectsymb{v}$. 
In the softmax case, from Eq.~\ref{eq:jacobian_softmax}, we have:
\begin{equation}
\matr{J}_{\mathrm{softmax}}(\vectsymb{z})\cdot\vectsymb{v} = \vectsymb{p} \odot (\vectsymb{v} - \bar{v}\vect{1}), \,\, \textstyle \text{with $\bar{v}:=\sum_{j} p_j v_j$},
\end{equation}
where $\odot$ denotes the Hadamard product; 
this requires a linear-time computation. 
For the sparsemax case, we have:
\begin{equation}
\matr{J}_{\mathrm{sparsemax}}(\vectsymb{z})\cdot\vectsymb{v} = \vectsymb{s} \odot (\vectsymb{v} - \hat{v}\vect{1}), \,\,  \text{with $\hat{v}:=\frac{\sum_{j\in S(\vectsymb{z})} v_j}{|S(\vectsymb{z})|}$.}
\end{equation}
Interestingly, if $\mathrm{sparsemax}(\vectsymb{z})$ has already been evaluated (\emph{i.e.}, in the forward step), 
then so has $S(\vectsymb{z})$, hence the nonzeros of $\matr{J}_{\mathrm{sparsemax}}(\vectsymb{z})\cdot\vectsymb{v}$ can be computed in only 
$O(|S(\vectsymb{z})|)$ time, which can be sublinear. 
This can be an important advantage of sparsemax over softmax if $K$ is large. 




\section{A Loss Function for Sparsemax}\label{sec:sparsemax_loss}

Now that we have defined the sparsemax transformation and established its main properties, 
we show how 
to use this transformation to design a new loss function 
that resembles the logistic loss, but can yield sparse posterior distributions. 
Later (in \S\ref{sec:experiments_simulations}--\ref{sec:experiments_multilabel}), we apply this loss to label proportion estimation and multi-label classification. 


\subsection{Logistic Loss}

Consider a dataset $\sett{D} := \{(\vectsymb{x}_i, y_i)\}_{i=1}^N$, 
where each $\vectsymb{x}_i \in \set{R}^D$ is an input vector and 
each $y_i \in \{1,\ldots,K\}$ is a target output label. 
We consider regularized empirical risk minimization problems 
of the form 
\begin{align}\label{eq:regularized_erm}
\mathrm{minimize}\,\,\,\, &
\frac{\lambda}{2}\|\matr{W}\|^2_{F} + \frac{1}{N} \sum_{i=1}^{N} L(\matr{W}\vectsymb{x}_i+\vectsymb{b}; y_i),\nonumber\\
\mathrm{w.r.t.} \,\,\,\, & {\matr{W} \in \set{R}^{K\times D},\,\, \vectsymb{b} \in \set{R}^K},
\end{align}
where $L$ is a loss function, $\matr{W}$ is a matrix of weights, and $\vectsymb{b}$ is a bias vector. 
The loss function associated with the softmax is the logistic loss (or negative log-likelihood):
\begin{eqnarray}\label{eq:softmax_loss}
L_{\mathrm{softmax}}(\vectsymb{z}; k) &=& -\log \mathrm{softmax}_k({\vectsymb{z}})\nonumber\\
&=& -z_k +\log \sum_j \exp (z_j),
\end{eqnarray}
where 
$\vectsymb{z} = \matr{W}\vectsymb{x}_i+\vectsymb{b}$, and 
$k=y_i$ is the ``gold'' label. 
The gradient of this loss is, invoking Eq.~\ref{eq:softmax_gradient},
\begin{equation}\label{eq:softmax_loss_gradient}
\nabla_{\vectsymb{z}} L_{\mathrm{softmax}}(\vectsymb{z}; k) 
= -\vectsymb{\delta}_k + {\mathrm{softmax}(\vectsymb{z})},
\end{equation}
where $\vectsymb{\delta}_k$ denotes the delta distribution on 
$k$, 
$[{\delta}_k]_j = 1$ if $j=k$, and $0$ otherwise. 
This is a well-known result; when plugged into a gradient-based optimizer, it leads to updates that move probability mass from the distribution predicted by the current model (\emph{i.e.}, ${\mathrm{softmax}}_k(\vectsymb{z})$) to the gold label (via $\vectsymb{\delta}_k$). Can we have something similar for sparsemax?

\subsection{Sparsemax Loss}

A nice aspect of the log-likelihood (Eq.~\ref{eq:softmax_loss}) is that adding up loss terms for several examples, assumed i.i.d, we obtain the log-probability of the full training data. Unfortunately, this idea cannot be carried out to sparsemax: now, some labels may have \emph{exactly} probability zero, so any model that assigns zero probability to a gold label would zero out the probability of the entire training sample.  
This is of course highly undesirable. One possible workaround is to define
\begin{equation}\label{eq:sparsemax_loss1}
L^{\epsilon}_{\mathrm{sparsemax}}(\vectsymb{z}; k) = -\log \frac{\epsilon + \mathrm{sparsemax}_k(\vectsymb{z})}{1+K\epsilon} ,
\end{equation}
where 
$\epsilon$ is a small constant, and 
$\frac{\epsilon + \mathrm{sparsemax}_k(\vectsymb{z})}{1+K\epsilon}$ is a ``perturbed'' 
sparsemax. However, 
this loss is non-convex, unlike 
the one in Eq.~\ref{eq:softmax_loss}.

Another possibility, which we explore here, is to 
construct 
an alternative loss function whose \emph{gradient} resembles the one in Eq.~\ref{eq:softmax_loss_gradient}. Note that the gradient is particularly important, since it is directly involved in the model updates for typical optimization algorithms. 
Formally, we want $L_{\mathrm{sparsemax}}$ to be a differentiable function such that 
\begin{eqnarray}\label{eq:sparsemax_loss_gradient}
\nabla_{\vectsymb{z}} L_{\mathrm{sparsemax}}(\vectsymb{z}; k) = 
-\vectsymb{\delta}_k + {\mathrm{sparsemax}(\vectsymb{z})}.
\end{eqnarray}

We show below that this property is fulfilled by the following function, 
henceforth called the \textbf{sparsemax loss}:
\begin{equation}\label{eq:sparsemax_loss}
L_{\mathrm{sparsemax}}(\vectsymb{z}; k) = -z_k + \frac{1}{2} \sum_{j \in S(\vectsymb{z})} (z_j^2 - \tau^2(\vectsymb{z})) + \frac{1}{2},
\end{equation}
where $\tau^2$ is the square of the threshold function in Eq.~\ref{eq:threshold_closedform}. 
This loss, which has never been considered in the literature to the best of our knowledge, has a number of interesting properties, stated in the 
next proposition. 
\begin{proposition}\label{prop:sparsemax_loss_properties}
The following holds:
\begin{enumerate}
\item $L_{\mathrm{sparsemax}}$ is differentiable everywhere, and its gradient is given by the expression in Eq.~\ref{eq:sparsemax_loss_gradient}. 
\item $L_{\mathrm{sparsemax}}$ is convex.
\item $L_{\mathrm{sparsemax}}(\vectsymb{z}+c\vect{1}; k)=L_{\mathrm{sparsemax}}(\vectsymb{z}; k)$, $\forall{c} \in \set{R}$.
\item $L_{\mathrm{sparsemax}}(\vectsymb{z}; k) \ge 0$, for all $\vectsymb{z}$ and $k$.
\item The following statements are all 
equivalent: 
(i) $L_{\mathrm{sparsemax}}(\vectsymb{z}; k) = 0$; 
(ii) ${\mathrm{sparsemax}}(\vectsymb{z}) = \vectsymb{\delta}_k$; 
(iii) margin separation holds, 
$z_k \ge 1+\max_{j \ne k} z_j$. 
\end{enumerate}
\end{proposition}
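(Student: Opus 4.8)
The plan is to first establish a closed-form identity for the loss that makes most of the five claims fall out almost immediately, namely
\begin{equation}
L_{\mathrm{sparsemax}}(\vectsymb{z}; k) = \tfrac{1}{2}\|\vectsymb{z} - \vectsymb{\delta}_k\|^2 - \tfrac{1}{2}\|\vectsymb{z} - \mathrm{sparsemax}(\vectsymb{z})\|^2.
\end{equation}
To derive it I would expand Eq.~\ref{eq:sparsemax_loss} using the facts, from Prop.~\ref{prop:sparsemax_solution}, that $\mathrm{sparsemax}_j(\vectsymb{z}) = z_j - \tau(\vectsymb{z})$ for $j \in S(\vectsymb{z})$ and $\sum_{j\in S(\vectsymb{z})}(z_j - \tau(\vectsymb{z})) = 1$. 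Writing $\vectsymb{p} = \mathrm{sparsemax}(\vectsymb{z})$, the sum $\sum_{j\in S}(z_j^2 - \tau^2)$ collapses to $\|\vectsymb{p}\|^2 + 2\tau$, and matching this against the expanded right-hand side (which also uses $\vectsymb{z}^\top\vectsymb{p} = \|\vectsymb{p}\|^2 + \tau$) confirms the identity. This algebraic reduction is the one genuinely computational step.

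With the identity in hand, part~1 follows by differentiation: the first term contributes $\vectsymb{z} - \vectsymb{\delta}_k$, while the second term is one half of the squared Euclidean distance from $\vectsymb{z}$ to the closed convex set $\Delta^{K-1}$, i.e.\ the Moreau envelope of its indicator, which is continuously differentiable with gradient $\vectsymb{z} - \mathrm{sparsemax}(\vectsymb{z})$. Subtracting yields $\nabla_{\vectsymb{z}}L = -\vectsymb{\delta}_k + \mathrm{sparsemax}(\vectsymb{z})$, matching Eq.~\ref{eq:sparsemax_loss_gradient}; since $\mathrm{sparsemax}$ is continuous (projection onto a convex set is nonexpansive), this gradient is continuous everywhere, so $L$ is $C^1$ even at the splitting points where $S(\vectsymb{z})$ changes. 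As a cross-check I would recompute the gradient directly from Eqs.~\ref{eq:sparsemax_closedform} and~\ref{eq:gradient_tau} on each region where $S$ is constant and confirm the pieces agree. For part~2 I would argue that $\nabla_{\vectsymb{z}}L$ is a monotone operator: since $\vectsymb{\delta}_k$ is constant, monotonicity reduces to that of $\mathrm{sparsemax}$, which holds because Euclidean projection onto a convex set is firmly nonexpansive and hence monotone; a $C^1$ map with monotone gradient is convex. Equivalently, the Hessian of $L$ equals the Jacobian $\matr{J}_{\mathrm{sparsemax}}$ of Eq.~\ref{eq:jacobian_sparsemax}, which is positive semidefinite by Cauchy--Schwarz.

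The remaining parts are then short. For part~3 I would differentiate along the all-ones direction: $\tfrac{d}{dc}L(\vectsymb{z}+c\vect{1};k) = \vect{1}^\top\nabla L = -1 + \sum_i \mathrm{sparsemax}_i(\vectsymb{z}) = -1 + 1 = 0$, so $L$ is invariant to the shift. Part~4 is immediate from the identity: because $\vectsymb{\delta}_k \in \Delta^{K-1}$ while $\mathrm{sparsemax}(\vectsymb{z})$ is the nearest point of $\Delta^{K-1}$ to $\vectsymb{z}$, we have $\|\vectsymb{z}-\mathrm{sparsemax}(\vectsymb{z})\|^2 \le \|\vectsymb{z}-\vectsymb{\delta}_k\|^2$, hence $L \ge 0$. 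For part~5 the identity gives (i)$\Leftrightarrow$(ii): $L = 0$ iff $\vectsymb{\delta}_k$ attains the minimal distance, which by uniqueness of the projection means $\mathrm{sparsemax}(\vectsymb{z}) = \vectsymb{\delta}_k$. Finally (ii)$\Leftrightarrow$(iii) follows from the closed form: $\mathrm{sparsemax}(\vectsymb{z}) = \vectsymb{\delta}_k$ forces $\tau(\vectsymb{z}) = z_k - 1$ with $z_j \le \tau(\vectsymb{z})$ for all $j\ne k$, which is exactly $z_k \ge 1 + \max_{j\ne k} z_j$; conversely this margin condition makes $\tau = z_k-1$ satisfy the normalization of Prop.~\ref{prop:sparsemax_solution}, pinning down $\mathrm{sparsemax}(\vectsymb{z}) = \vectsymb{\delta}_k$.

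I expect the main obstacle to be part~1 --- specifically, justifying differentiability exactly at the splitting points, where the support set jumps. The cleanest way around it is the Moreau-envelope identity above, which converts the question into the standard smoothness of the squared distance to a convex set; deriving that identity is therefore the real work, and everything else is bookkeeping.
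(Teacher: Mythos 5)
Your proof is correct, but it takes a genuinely different route from the paper's. The paper works directly with the defining formula: it differentiates Eq.~\ref{eq:sparsemax_loss} term by term on each region where $S(\vectsymb{z})$ is constant, reads off convexity from the (sub-)Hessian $\mathrm{Id}-\vect{1}\vect{1}^{\top}/|S(\vectsymb{z})|$, proves shift-invariance by direct substitution, and obtains parts 4--5 by combining the zero-gradient condition at minima with the coordinate-nonexpansion property of Prop.~\ref{prop:sparsemax_softmax_properties}(4). Your decomposition $L_{\mathrm{sparsemax}}(\vectsymb{z};k)=\tfrac{1}{2}\|\vectsymb{z}-\vectsymb{\delta}_k\|^2-\tfrac{1}{2}\,\mathrm{dist}^2(\vectsymb{z},\Delta^{K-1})$ does not appear in the paper and checks out (both sides reduce to $-z_k+\tau(\vectsymb{z})+\tfrac{1}{2}\|\vectsymb{p}\|^2+\tfrac{1}{2}$ using $\sum_{j\in S(\vectsymb{z})}(z_j-\tau(\vectsymb{z}))=1$). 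It buys several things: differentiability \emph{at the splitting points} comes for free from the $C^1$ smoothness of the Moreau envelope, a point the paper's region-by-region computation glosses over; nonnegativity and the equivalence (i)$\Leftrightarrow$(ii) become one-line consequences of the projection being the unique distance minimizer; and convexity via monotonicity of the projection is rigorous where the paper's ``sub-Hessian'' argument is informal at the points where the loss is not twice differentiable. The paper's approach is more elementary and self-contained, and its proof of (ii)$\Leftrightarrow$(iii) leans on Prop.~\ref{prop:sparsemax_softmax_properties}(4) rather than re-deriving the condition from the closed form of $\tau$ as you do; both are fine. The one step you should write out in full is the algebraic verification of the identity itself, since the entire argument hangs on it.
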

\begin{proof}
See App.~\ref{sec:proof_prop_sparsemax_loss_properties} in the supplemental material. 
\end{proof}
Note that the first four properties in Prop.~\ref{prop:sparsemax_loss_properties} are also satisfied by the logistic loss, except that the gradient is given 
by Eq.~\ref{eq:softmax_loss_gradient}. 
The fifth property is particularly interesting, since it is satisfied by the hinge loss of support vector machines. However, unlike the hinge loss, $L_{\mathrm{sparsemax}}$ is everywhere differentiable, hence amenable to smooth optimization 
methods such as L-BFGS or accelerated gradient descent \citep{Liu1989,Nesterov1983}. 


\subsection{Relation to the Huber Loss}

Coincidentally, as we next show, the sparsemax loss 
in the binary case 
reduces to the Huber classification loss, an important loss function 
in robust statistics \citep{Huber1964}. 

Let us note first that, from Eq.~\ref{eq:sparsemax_loss}, we have, if $|S(\vectsymb{z})| = 1$,
\begin{eqnarray}\label{eq:sparsemax_loss_specialcases1}
L_{\mathrm{sparsemax}}(\vectsymb{z}; k) &=& -z_k + z_{(1)},
\end{eqnarray}
and, if $|S(\vectsymb{z})| = 2$, $L_{\mathrm{sparsemax}}(\vectsymb{z}; k) = $
\begin{eqnarray}\label{eq:sparsemax_loss_specialcases2}
-z_k + \frac{1 + (z_{(1)} - z_{(2)})^2}{4} +  \frac{z_{(1)} + z_{(2)}}{2},
\end{eqnarray}
where $z_{(1)}\ge z_{(2)}\ge \ldots$ are the sorted components of $\vectsymb{z}$. 
Note that the second expression, when $z_{(1)}-z_{(2)}=1$, equals the first one, 
which asserts the continuity of the loss even though $|S(\vectsymb{z})|$ is non-continuous on $\vectsymb{z}$.

In the two-class case, 
we have $|S(\vectsymb{z})| =1$ if 
$z_{(1)} \ge 1+z_{(2)}$ (unit margin separation), 
and  $|S(\vectsymb{z})| =2$ otherwise. 
Assume without loss of generality that the correct label is $k=1$, and define $t=z_{1}-z_{2}$. 
From Eqs.~\ref{eq:sparsemax_loss_specialcases1}--\ref{eq:sparsemax_loss_specialcases2}, 
we have 
\begin{equation}
L_{\mathrm{sparsemax}}(t) = \left\{
\begin{array}{ll}
0 & \text{if $t \ge 1$}\\
-t & \text{if $t \le -1$}\\
\frac{(t - 1)^2}{4} & \text{if $-1 < t < 1$,}
\end{array}
\right.
\end{equation}
whose graph is shown in Fig.~\ref{fig:sparsemax_loss_binary_plot}. 
This loss is a variant of the Huber loss adapted for classification, and has been called ``modified Huber loss'' by \citet{Zhang2004,Zou2006}.

\if 0
From the expressions in Eqs.~\ref{eq:sparsemax_loss_specialcases1}--\ref{eq:sparsemax_loss_specialcases2}, 
we have that 
\begin{eqnarray}
\lefteqn{L_{\mathrm{sparsemax}}(\vectsymb{z}; k) =}\\
&&\!\!\!\!\!\!\!\! \left\{
\begin{array}{ll}
0 & \text{if $z_{(1)} \ge 1+z_{(2)}$ and $z_k = z_{(1)}$}\\
z_{(1)}-z_{(2)} & \text{if $z_{(1)} \ge 1+z_{(2)}$ and $z_k = z_{(2)}$}\\
\frac{1 + (z_{(1)} - z_{(2)})^2}{4} - \frac{z_{1}-z_{2}}{2} & \text{if $z_{(1)} < 1+z_{(2)}$ and $z_k = z_{(1)}$}\\
\frac{1 + (z_{(1)} - z_{(2)})^2}{4} + \frac{z_{1}-z_{2}}{2} & \text{if $z_{(1)} < 1+z_{(2)}$ and $z_k = z_{(2)}$.}
\end{array}
\right.\nonumber
\end{eqnarray}
Let $t=z_{(1)}-z_{(2)} \ge 0$ if $z_k = z_{(1)}$ (correct decision), and $t=z_{(2)}-z_{(1)} < 0$ otherwise (incorrect decision).
Then, the loss above can be written more compactly as:
\begin{equation}
L_{\mathrm{sparsemax}}(t) = \left\{
\begin{array}{ll}
0 & \text{if $t \ge 1$}\\
-t & \text{if $t \le -1$}\\
\frac{(t - 1)^2}{4} & \text{if $-1 < t < 1$,}
\end{array}
\right.
\end{equation}
and its graph is shown in Figure~\ref{fig:sparsemax_loss_binary_plot}. 
This loss is a ``rotated version'' of the Huber loss for regression and has been called ``modified Huber loss'' by \citet{Zhang2004,Zou2006}.
\fi

\begin{figure}[t]
\centering
\includegraphics[width=0.8\columnwidth]{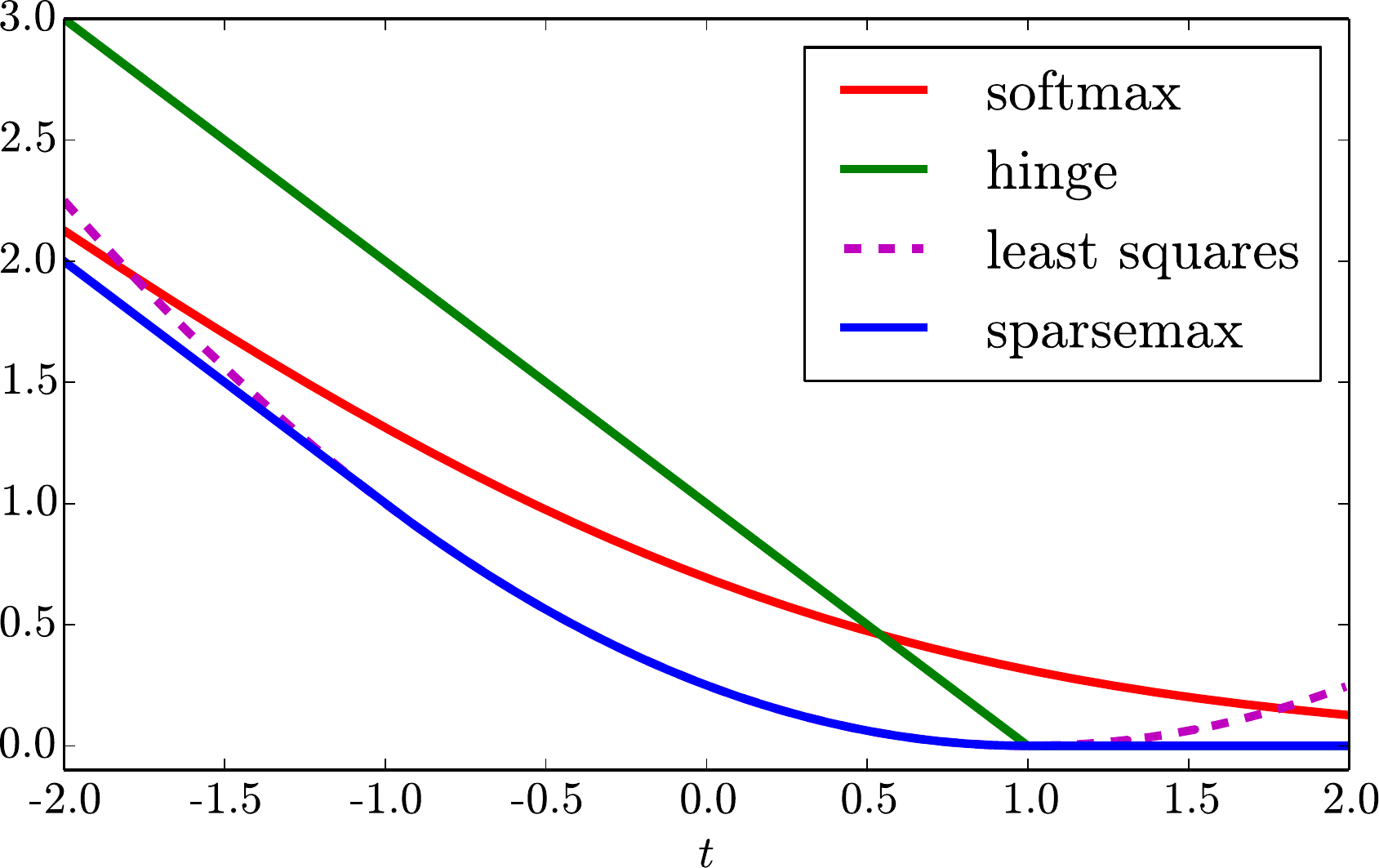}
\caption{Comparison between the sparsemax loss and other commonly used losses for binary classification.\label{fig:sparsemax_loss_binary_plot}}
\end{figure}

\subsection{Generalization to Multi-Label Classification}\label{sec:multilabel}


We end this section by showing a generalization of the loss functions in Eqs.~\ref{eq:softmax_loss} and \ref{eq:sparsemax_loss} to {{multi-label classification}}, \emph{i.e.}, problems in which the target is a non-empty set of labels $\sett{Y} \in 2^{[K]} \setminus \{\varnothing\}$ rather than a single label.%
\footnote{Not to be confused with ``multi-class classification,'' which denotes problems where 
$\sett{Y} = [K]$ and $K>2$.} %
Such problems have attracted recent interest \citep{Zhang2014multilabel}. 

More generally, we consider the problem of estimating {\textbf{sparse label proportions}}, where 
the target is a probability distribution $\vectsymb{q} \in \Delta^{K-1}$, such that $\sett{Y} = \{k \,\,|\,\, q_k > 0\}$. We assume a training dataset $\sett{D} := \{(\vectsymb{x}_i, \vectsymb{q}_i)\}_{i=1}^N$, 
where each $\vectsymb{x}_i \in \set{R}^D$ is an input vector and 
each $\vectsymb{q}_i \in \Delta^{K-1}$ is a target distribution over outputs, assumed sparse.%
\footnote{This scenario is also relevant for ``learning with a probabilistic teacher'' \citep{Agrawala1970} and semi-supervised learning \citep{Chapelle2006}, as it can model label uncertainty.} %
This subsumes single-label classification, where 
all $\vectsymb{q}_i$ are delta distributions concentrated on a single class.  
The generalization of the multinomial logistic loss to this setting is
\begin{eqnarray}\label{eq:softmax_loss_multilabel}
L_{\mathrm{softmax}}(\vectsymb{z}; \vectsymb{q}) &=& \mathbf{KL}(\vectsymb{q}\,\,\|\,\,\mathrm{softmax}({\vectsymb{z}}))\\
&=& \textstyle -\mathbf{H}(\vectsymb{q})-\DP{\vectsymb{q}}{\vectsymb{z}} +\log \sum_j \exp (z_j),\nonumber
\end{eqnarray}
where $\mathbf{KL}(.\|.)$ and $\mathbf{H}(.)$ denote the Kullback-Leibler divergence and the Shannon entropy, respectively. 
Note that, up to a constant, this loss is equivalent to standard logistic regression with soft labels. 
The gradient of this loss is
\begin{equation}\label{eq:softmax_loss_gradient_multilabel}
\nabla_{\vectsymb{z}} L_{\mathrm{softmax}}(\vectsymb{z}; \vectsymb{q}) = -\vectsymb{q} + {\mathrm{softmax}(\vectsymb{z})}.
\end{equation} 
The corresponding generalization in the sparsemax case is:
\begin{equation}\label{eq:sparsemax_loss_multilabel}
L_{\mathrm{sparsemax}}(\vectsymb{z}; \vectsymb{q}) = -\DP{\vectsymb{q}}{\vectsymb{z}} + \frac{1}{2} \sum_{j \in S(\vectsymb{z})} (z_j^2 - \tau^2(\vectsymb{z})) + \frac{1}{2}\|\vectsymb{q}\|^2,
\end{equation}
which satisfies the properties in Prop.~\ref{prop:sparsemax_loss_properties} and has gradient 
\begin{eqnarray}\label{eq:sparsemax_loss_gradient_multilabel}
\nabla_{\vectsymb{z}} L_{\mathrm{sparsemax}}(\vectsymb{z}; \vectsymb{q})= 
-\vectsymb{q} + {\mathrm{sparsemax}(\vectsymb{z})}. 
\end{eqnarray}
We make use of these losses in our experiments (\S\ref{sec:experiments_simulations}--\ref{sec:experiments_multilabel}). 

\section{Experiments}\label{sec:experiments}
\label{experiments}

We next evaluate empirically the ability of sparsemax for addressing two classes of problems:
\begin{enumerate}
\item Label proportion estimation and multi-label classification, 
via the sparsemax loss in Eq.~\ref{eq:sparsemax_loss_multilabel}  (\S\ref{sec:experiments_simulations}--\ref{sec:experiments_multilabel}).
\item Attention-based neural networks, via the sparsemax transformation of Eq.~\ref{eq:sparsemax}  (\S\ref{sec:experiments_attention}).  
\end{enumerate}


\begin{SCfigure*}[\sidecaptionrelwidth][t]
\centering
\includegraphics[width=0.7\textwidth]{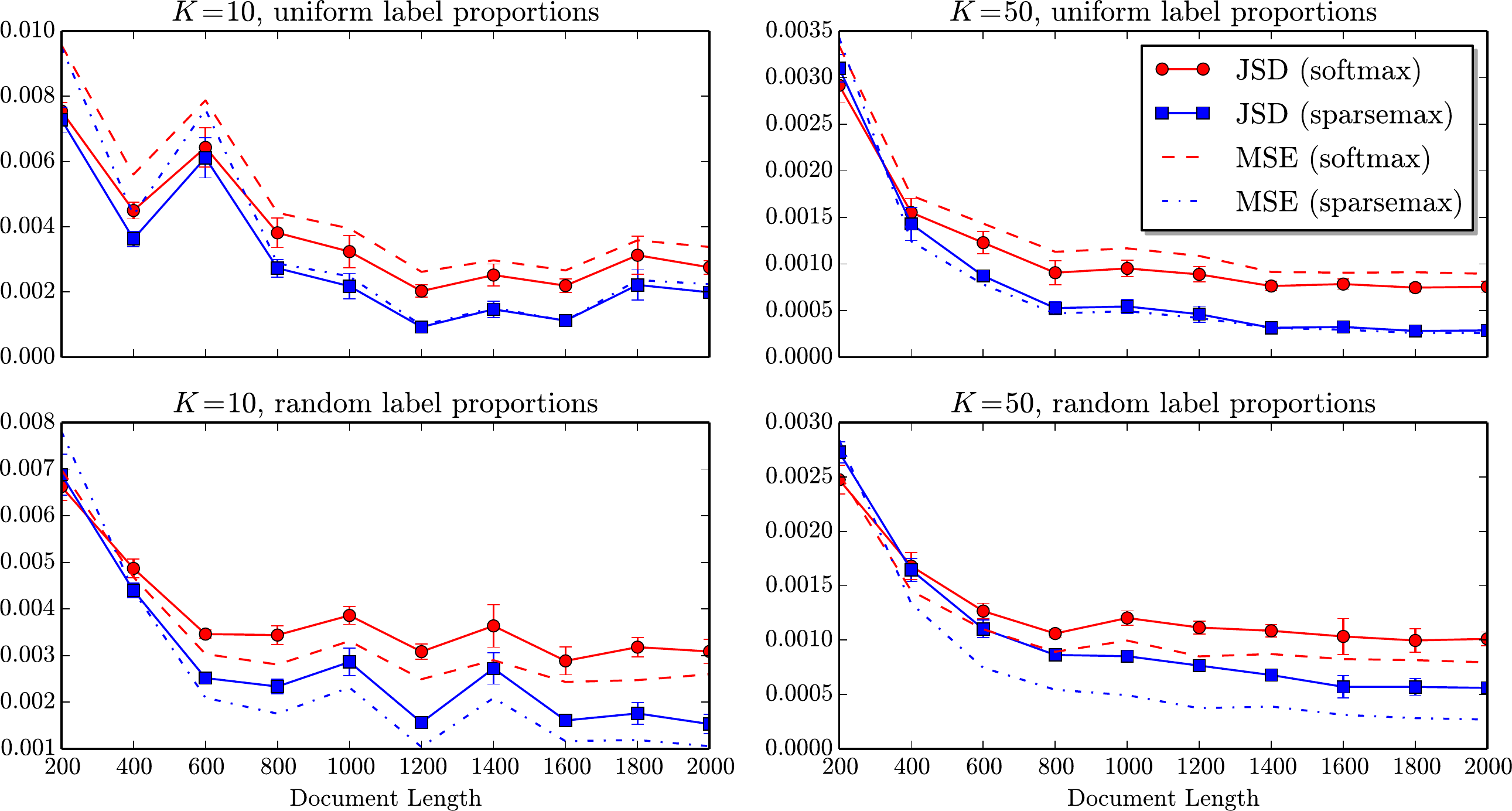}
\caption{Simulation results for the estimation of label posteriors, for uniform (top) and random mixtures (bottom). Shown are the mean squared error and the Jensen-Shannon divergence 
as a function of the document length, for the logistic and the sparsemax estimators.}\label{fig:toy_experiments}
\end{SCfigure*}

\subsection{Label Proportion Estimation}\label{sec:experiments_simulations}



We show simulation results for 
sparse label proportion estimation 
on synthetic data. 
Since sparsemax can predict sparse distributions, we expect 
its superiority in this task. 
We generated datasets with 1,200 training and 1,000 test examples. 
Each example emulates a ``multi-labeled document'': 
a variable-length sequence of word symbols, assigned to 
multiple topics (labels). 
We pick the number of labels $N\in \{1,\ldots,K\}$ by sampling from a Poisson distribution with rejection sampling, and draw the $N$ labels from a multinomial. Then, we pick a document length from a Poisson, and repeatedly sample its words from the mixture of the $N$ label-specific multinomials. We experimented with two settings: {\textbf{uniform mixtures}} ($q_{k_n} = 1/N$ for the $N$ active labels $k_1,\ldots,k_N$) and 
{\textbf{random mixtures}} (whose label proportions $q_{k_n}$ were drawn from a flat Dirichlet).%
\footnote{Note that, with uniform mixtures, the problem becomes essentially multi-label classification.} %
We set the vocabulary size to be equal to the number of labels $K \in \{10, 50\}$, and varied the average document length between 200 and 2,000 words. 
We trained models by optimizing Eq.~\ref{eq:regularized_erm} with $L \in \{L_{\mathrm{softmax}}, L_{\mathrm{sparsemax}}\}$ (Eqs.~\ref{eq:softmax_loss_multilabel} and \ref{eq:sparsemax_loss_multilabel}). We picked the regularization constant $\lambda \in \{10^j\}_{j=-9}^0$ with 5-fold cross-validation.

Results are shown in Fig.~\ref{fig:toy_experiments}. 
We report the {{mean squared error}} (average of $\|\vectsymb{q}-\vectsymb{p}\|^2$ on the test set, where $\vectsymb{q}$ and $\vectsymb{p}$ are respectively the target and predicted label posteriors) and the {{Jensen-Shannon divergence}} 
(average of $\mathbf{JS}(\vectsymb{q},\vectsymb{p}) := \frac{1}{2}\mathbf{KL}(\vectsymb{q}\|\frac{\vectsymb{p}+\vectsymb{q}}{2}) + \frac{1}{2}\mathbf{KL}(\vectsymb{p}\|\frac{\vectsymb{p}+\vectsymb{q}}{2})$).%
\footnote{Note that the KL divergence is not an appropriate metric here, since the sparsity of $\vectsymb{q}$ and $\vectsymb{p}$ could lead to $-\infty$ values.} %
We observe that the two losses perform similarly for small document lengths (where the signal is weaker), but as the average document length exceeds 400, 
the sparsemax loss starts outperforming the logistic loss consistently. 
This is because with a stronger signal the sparsemax estimator manages to identify correctly the support of the label proportions $\vectsymb{q}$, contributing to reduce both the mean squared error and the JS divergence. This occurs both for uniform and random mixtures. 

\subsection{Multi-Label Classification on Benchmark Datasets}\label{sec:experiments_multilabel}

Next, 
we ran experiments in five benchmark multi-label classification datasets: 
the four small-scale datasets used by \citet{Koyejo2015},%
\footnote{Obtained from \url{http://mulan.sourceforge.net/datasets-mlc.html}.} %
and the much larger Reuters RCV1 v2 dataset of \citet{Lewis2004}.%
\footnote{Obtained from \url{https://www.csie.ntu.edu.tw/~cjlin/libsvmtools/datasets/multilabel.html}.} %
For all datasets, we removed examples without labels (\emph{i.e.} where $\sett{Y}=\varnothing$).  
For all but the Reuters dataset, we normalized the features to have zero mean and unit variance. 
Statistics for these datasets are presented in Table~\ref{tab:datasets}. 

\begin{table}[t]
\caption{Statistics for the 5 multi-label classification datasets.\label{tab:datasets}}
\label{sample-table}
\vspace{0.1cm}
\begin{center}
\begin{small}
\begin{sc}
\begin{tabular}{lccrr}
\belowspace
Dataset & Descr. & \#labels & \#train & \#test \\
\hline
\abovespace
Scene & images & 6 & 1211 & 1196 \\
Emotions & music & 6 & 393 & 202 \\
Birds & audio & 19 & 323 & 322 \\
CAL500 & music & 174 & 400 & 100 \\ 
\belowspace
Reuters & text & 103 & 23,149 & 781,265 \\
\hline
\end{tabular}
\end{sc}
\end{small}
\end{center}
\vskip -0.1in
\end{table}

\if 0
\begin{table}[t]
\caption{Micro (top) and macro-averaged (bottom) F$_1$ scores for the logistic, softmax, and sparsemax losses on benchmark multi-label classification datasets.}
\label{table:exp_multilabel}
\vspace{-0.2cm}
\begin{center}
\begin{small}
\begin{sc}
\begin{tabular}{lcccc}
\hline
\abovespace\belowspace
Dataset & Logistic & Softmax & Sparsemax \\
\hline
\abovespace
Scene ($L=6$) & 70.96 & {\bf 74.01} & 73.45 \\
 & 72.95 & {\bf 75.03} & 74.57 \\
\abovespace
Emotions ($L=6$) & 66.75 & {\bf 67.34} & 66.38 \\
	& {\bf 68.56} & 67.51 & 66.07 \\
\abovespace
Birds ($L=19$)  & 45.78 & 48.67 & {\bf 49.44} \\
 & 33.77 & 37.06 & {\bf 39.13} \\
\abovespace
CAL500 ($L=174$)   & {\bf 48.88} & 47.46 & 48.47 \\ 
& 24.49 & 23.51 & {\bf 26.20} \\
\abovespace
Reuters ($L=103$)   & {\bf 81.19} & 79.47 & 80.00 \\
\belowspace
 & 60.02 & 56.30 & {\bf 61.27} \\
\hline
\end{tabular}
\end{sc}
\end{small}
\end{center}
\vskip -0.1in
\end{table}
\fi

\begin{table}[t]
\caption{Micro (left) and macro-averaged (right) F$_1$ scores for the logistic, softmax, and sparsemax losses on benchmark 
datasets.}
\label{table:exp_multilabel}
\vspace{-0.2cm}
\begin{center}
\begin{small}
\begin{sc}
\begin{tabular}{l@{\hskip 8pt}c@{\hskip 8pt}c@{\hskip 8pt}c@{\hskip 8pt}r}
\belowspace
Dataset & Logistic & Softmax & Sparsemax \\
\hline
\abovespace
Scene & 70.96 / 72.95 & {\bf 74.01} / {\bf 75.03} & 73.45 / 74.57\\
Emotions & 66.75 / {\bf 68.56} & {\bf 67.34} / 67.51 & 66.38 / 66.07\\
Birds & 45.78 / 33.77 & 48.67 / 37.06 & {\bf 49.44} / {\bf 39.13}\\
CAL500 & {\bf 48.88} / 24.49 & 47.46 / 23.51 & 48.47 / {\bf 26.20}\\ 
\belowspace
Reuters & {\bf 81.19} / 60.02 & 79.47 / 56.30 & 80.00 / {\bf 61.27}\\
\hline
\end{tabular}
\end{sc}
\end{small}
\end{center}
\vspace{-0.5cm}
\end{table}

Recent work has investigated the consistency of multi-label classifiers  for various micro and macro-averaged metrics \citep{Gao2013,Koyejo2015}, among which a plug-in classifier that trains independent binary logistic regressors on each label, and then tunes a probability threshold $\delta \in [0,1]$ on validation data. At test time, those labels whose posteriors are above the threshold are predicted to be ``on.'' 
We used this procedure (called {\sc{Logistic}}) as a baseline for comparison.
Our second baseline ({\sc{Softmax}}) is a multinomial logistic regressor, using the loss function in 
Eq.~\ref{eq:softmax_loss_multilabel}, where the target distribution $\vectsymb{q}$ is set to uniform over the active labels. A similar probability threshold $p_0$ is used for prediction, above which a label is predicted to be ``on.'' 
We compare these two systems with the sparsemax loss function of 
Eq.~\ref{eq:sparsemax_loss_multilabel}. We found it beneficial to scale 
the label scores $\vectsymb{z}$ by a constant $t\ge 1$ at test time, before applying the sparsemax transformation, to make the resulting distribution $\vectsymb{p} = \mathrm{sparsemax}(t\vectsymb{z})$ more sparse. We then predict the $k$th label to be ``on'' if $p_k \ne 0$.  

We optimized the three losses with L-BFGS 
(for a maximum of 100 epochs), tuning the hyperparameters in a held-out validation set (for the Reuters dataset) and with 5-fold cross-validation (for the other four datasets). 
The hyperparameters are the regularization constant 
$\lambda \in \{10^{j}\}_{j=-8}^2$, 
the probability thresholds 
$\delta \in \{.05\times n\}_{n=1}^{10}$ for {\sc{Logistic}} 
and $p_0 \in \{n/K\}_{n=1}^{10}$ for {\sc{Softmax}}, 
and the coefficient 
$t \in \{0.5\times n\}_{n=1}^{10}$ for {\sc{Sparsemax}}.

Results are shown in Table~\ref{table:exp_multilabel}. 
Overall, the performances of the three losses are all very similar, with a slight advantage of {\sc{Sparsemax}}, which  attained the highest results in 4 out of 10 experiments, while {\sc{Logistic}} and {\sc{Softmax}} won 
3 times each. 
In particular, sparsemax appears better suited for problems with larger numbers of labels.

\subsection{Neural Networks with Attention Mechanisms}\label{sec:experiments_attention}

We now assess the suitability of the sparsemax transformation to construct a ``sparse'' neural attention mechanism. 

We ran experiments on the task of natural language inference, using the recently released SNLI 1.0 corpus \citep{Bowman2015}, a 
collection of 570,000 human-written English sentence pairs. Each pair consists of a premise and an hypothesis, manually labeled with 
one the labels {\sc entailment}, {\sc contradiction}, or {\sc neutral}. 
We used the provided training, development, and test splits. %

The architecture of our system, shown in Fig.~\ref{fig:neural_snli}, is the same as the one proposed by \citet{Rocktaschel2015}. 
We compare the performance of four systems: {\sc{NoAttention}}, a (gated) RNN-based system similar to \citet{Bowman2015}; 
{\sc{LogisticAttention}}, 
an attention-based system with independent logistic activations; 
{\sc{SoftAttention}}, 
a near-reproduction of the \citet{Rocktaschel2015}'s attention-based system; 
and {\sc{SparseAttention}}, which replaces the latter softmax-activated attention mechanism  
by a sparsemax activation. 
\begin{figure}[t]
\centering
\includegraphics[width=\columnwidth]{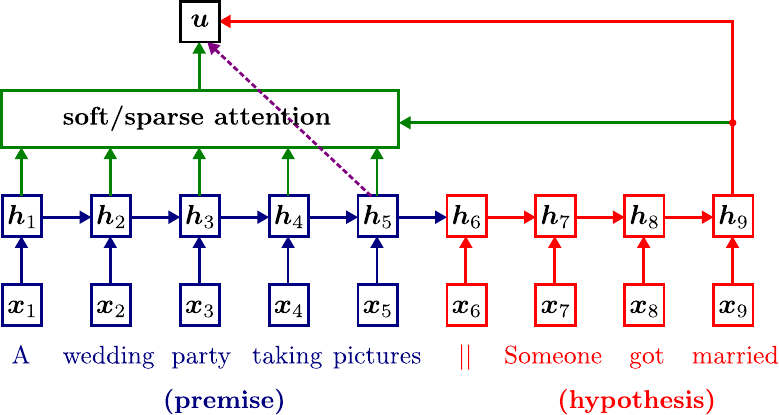}
\caption{Network diagram for the NL inference problem. The premise and hypothesis are both fed into (gated) RNNs. The {\sc{NoAttention}} system replaces the attention part (in green) by a direct connection from the last premise state to the output (dashed violet line). The {\sc{LogisticAttention}}, {\sc{SoftAttention}} and {\sc{SparseAttention}} systems have respectively independent logistics, a softmax, and a sparsemax-activated attention mechanism. In this example, $L=5$ and $N=9$.\label{fig:neural_snli}}
\end{figure}

We represent the words in the premise and in the hypothesis with 
300-dimensional GloVe vectors \citep{Pennington2014}, 
not optimized during training, 
which we linearly project onto a $D$-dimensional subspace \citep{Astudillo2015}.%
\footnote{We used GloVe-840B embeddings trained on Common Crawl 
(\url{http://nlp.stanford.edu/projects/glove/}).} %
We denote by $\vectsymb{x}_1,\ldots,\vectsymb{x}_L$ and 
$\vectsymb{x}_{L+1},\ldots,\vectsymb{x}_N$, respectively, the projected premise and hypothesis 
word vectors. 
These sequences are then fed into two recurrent networks (one for each). 
Instead of long short-term memories, as \citet{Rocktaschel2015},
we used 
gated recurrent units (GRUs, \citealt{Cho2014}), which behave similarly but have fewer parameters. 
Our premise GRU generates a state sequence 
$\matr{H}_{1:L} := [\vectsymb{h}_1 \ldots \vectsymb{h}_L] \in \set{R}^{D\times L}$ as follows: 
\begin{eqnarray}
\vectsymb{z}_t &\!\!\!\!=\!\!\!\!& \sigma(\matr{W}^{xz} \vectsymb{x}_t + \matr{W}^{hz} \vectsymb{h}_{t-1} + \vectsymb{b}^z)\\
\vectsymb{r}_t &\!\!\!\!=\!\!\!\!& \sigma(\matr{W}^{xr} \vectsymb{x}_t + \matr{W}^{hr} \vectsymb{h}_{t-1} + \vectsymb{b}^r)\\
{\bar{\vectsymb{h}}}_t &\!\!\!\!=\!\!\!\!& \tanh(\matr{W}^{xh} \vectsymb{x}_t + \matr{W}^{hh} (\vectsymb{r}_t \odot \vectsymb{h}_{t-1}) + \vectsymb{b}^h)\\
\vectsymb{h}_t &\!\!\!\!=\!\!\!\!& (1-\vectsymb{z}_t) \vectsymb{h}_{t-1} +  \vectsymb{z}_t{\bar{\vectsymb{h}}}_t,
\end{eqnarray}
with model parameters 
$\matr{W}^{\{{xz}, {xr}, {xh}, {hz}, {hr}, {hh}\}} \in \set{R}^{D\times D}$ 
and  
$\vectsymb{b}^{\{{z}, r, h\}} \in \set{R}^D$.  
Likewise, our hypothesis GRU (with distinct parameters) generates a state sequence  $[\vectsymb{h}_{L+1},\ldots,\vectsymb{h}_N]$, being initialized with the last state from the premise ($\vectsymb{h}_{L}$). 
The {\sc{NoAttention}} system 
then computes the final state $\vectsymb{u}$
based on the last states from the 
premise and the hypothesis as follows:
\begin{eqnarray}
\vectsymb{u} = \tanh(\matr{W}^{pu}\vectsymb{h}_L + \matr{W}^{hu}\vectsymb{h}_N + \vectsymb{b}^u)\label{eq:snli_output_state}
\end{eqnarray}
where 
$\matr{W}^{pu},\matr{W}^{hu} \in \set{R}^{D\times D}$ and $\vectsymb{b}^u \in \set{R}^D$.   
Finally, it predicts a label ${\widehat{y}}$ from $\vectsymb{u}$ with a standard softmax layer. 
The {\sc{SoftAttention}} system, instead of using the last premise state $\vectsymb{h}_L$, computes a weighted average of premise words with an attention mechanism, replacing Eq.~\ref{eq:snli_output_state} by
\begin{eqnarray}
z_t &=& \DP{\vectsymb{v}}{\tanh(\matr{W}^{pm}\vectsymb{h}_{t} + \matr{W}^{hm}\vectsymb{h}_N  + \vectsymb{b}^m)}\\
\vectsymb{p} &=& \mathrm{softmax}(\vectsymb{z}),\,\, \text{where $\vectsymb{z}:=(z_1,\ldots,z_L)$} \label{eq:snli_soft_attention}\\
\vectsymb{r} &=& \matr{H}_{1:L}\vectsymb{p}\\
\vectsymb{u} &=& \tanh(\matr{W}^{pu}\vectsymb{r} + \matr{W}^{hu}\vectsymb{h}_N + \vectsymb{b}^u),
\end{eqnarray}
where $\matr{W}^{pm}, \matr{W}^{hm} \in \set{R}^{D\times D}$ and $\vectsymb{b}^m,\vectsymb{v} \in \set{R}^D$. 
The {\sc{LogisticAttention}} system, instead of Eq.~\ref{eq:snli_soft_attention}, 
computes $\vectsymb{p} = (\sigma(z_1),\ldots,\sigma(z_L))$.  
Finally, the  {\sc{SparseAttention}} system replaces Eq.~\ref{eq:snli_soft_attention} by
$\vectsymb{p} = \mathrm{sparsemax}(\vectsymb{z})$.

We optimized all the systems with Adam \citep{Kingma2014}, using the default parameters $\beta_1=0.9$, $\beta_2=0.999$, and $\epsilon=10^{-8}$, and setting the learning rate to $3\times 10^{-4}$. 
We tuned a $\ell_2$-regularization coefficient in $\{0,10^{-4},3\times 10^{-4},10^{-3}\}$ and, as 
\citet{Rocktaschel2015}, a dropout probability of $0.1$
in the inputs and outputs of the network. 

\begin{table}[t]
\caption{Accuracies for the natural language inference task. Shown 
are our implementations of a system without attention, and with logistic, soft, and sparse attentions.} 

\label{table:exp_snli}
\begin{center}
\begin{small}
\begin{sc}
\begin{tabular}{l@{\hskip 8pt}c@{\hskip 8pt}c}
\belowspace
 & Dev Acc. & Test Acc. \\
\hline
\abovespace
{\sc{NoAttention}} & 81.84 & 80.99 \\
{\sc{LogisticAttention}} & 82.11 & 80.84 \\
{\sc{SoftAttention}} & 82.86 & 82.08 \\
\belowspace
{\sc{SparseAttention}} & 82.52 & {\textbf{82.20}} \\
\hline
\end{tabular}
\end{sc}
\end{small}
\end{center}
\vskip -0.1in
\end{table}

\begin{table}[t]
\caption{Examples of sparse attention for the natural language inference task. Nonzero attention coefficients are marked {\textbf{in bold}}. Our system classified all four examples correctly. The examples were picked from \citet{Rocktaschel2015}.} 
\label{table:exp_sparse_snli}
\begin{center}
\begin{small}
\begin{tabular}{@{\hskip 0pt}p{\columnwidth}@{\hskip 0pt}}
\hline
\abovespace
{A boy {\textbf{rides on}} a {\textbf{camel}} in a crowded area while talking on his cellphone.} \\
\belowspace
Hypothesis: {\it{A boy is riding an animal.}} \hfill [entailment]\\
\hline
\abovespace
A young girl wearing {\textbf{a pink coat}} plays with a {\textbf{yellow}} toy golf club. \\
\belowspace
Hypothesis: {\it{A girl is wearing a blue jacket.}} \hfill [contradiction]\\
\hline
\abovespace
Two black dogs are {\textbf{frolicking}} around the {\textbf{grass together}}.\\
\belowspace
Hypothesis: {\it{Two dogs swim in the lake.}} \hfill [contradiction]\\
\hline
\abovespace
A man wearing a yellow striped shirt {\textbf{laughs}} while {\textbf{seated next}} to another {\textbf{man}} who is wearing a light blue shirt and {\textbf{clasping}} his hands together.\\
\belowspace
Hypothesis: {\it{Two mimes sit in complete silence.}} \hfill [contradiction]\\
\hline
\end{tabular}
\end{small}
\end{center}
\vskip -0.1in
\end{table}

The results are shown in Table~\ref{table:exp_snli}. We observe that the soft and sparse-activated attention systems perform similarly, the latter being slightly more accurate on the test set, 
and that both outperform the {\sc{NoAttention}} and {\sc{LogisticAttention}} systems.%
\footnote{\citet{Rocktaschel2015} report scores slightly above ours: they reached a test accuracy of 82.3\% for their implementation of {\sc{SoftAttention}}, and 83.5\% with their best system, a more elaborate word-by-word attention model. Differences in the former case may be due to distinct word vectors and the use of LSTMs instead of GRUs.} %

Table~\ref{table:exp_sparse_snli} shows examples of sentence pairs, highlighting the premise words selected by the {\sc{SparseAttention}} mechanism. We can see that, for all examples, only a small number of words are selected, which are key to making the final decision. 
Compared to a softmax-activated mechanism, which provides a dense distribution over all the words, 
the sparsemax activation yields a compact and more interpretable selection, which can be particularly useful in long sentences such as the one in the bottom row.

\section{Conclusions}\label{sec:conclusions}

We introduced the sparsemax transformation, which has similar properties to the traditional softmax, but is able to output sparse probability distributions. 
We derived a closed-form expression for its Jacobian, needed for the backpropagation algorithm, and we proposed a novel ``sparsemax loss'' function, a sparse analogue of the logistic loss, which is smooth and convex.  
Empirical results in multi-label classification and in attention networks for natural language inference attest the validity of our approach. 

The connection between sparse modeling and interpretability is key in 
signal processing \citep{Hastie2015}. 
Our approach is distinctive: it is not the model that is assumed sparse, 
but the label posteriors that the model parametrizes. 
Sparsity is also a desirable (and biologically plausible) property in neural networks, 
present in rectified units 
\citep{Glorot2011AISTATS} and maxout nets \citep{Goodfellow2013}.  


There are several avenues for future research. 
The ability of sparsemax-activated attention to select only a few variables to attend makes it potentially relevant to 
neural architectures with random access memory 
\citep{Graves2014,Grefenstette2015,Sukhbaatar2015},  
since it offers a compromise between soft and hard operations, maintaining differentiability. 
In fact, ``harder'' forms of attention are often useful, arising 
as word alignments in 
machine translation pipelines, 
or latent variables as in \citet{Xu2015}.  
Sparsemax is also appealing for hierarchical attention: if we define a top-down product of distributions along the hierarchy, the sparse distributions produced by sparsemax will automatically prune the hierarchy, leading to computational savings. 
A possible disadvantage of sparsemax over softmax is that it seems less GPU-friendly, since it requires sort operations or linear-selection algorithms. 
There is, however, recent work providing efficient implementations of these algorithms on GPUs \citep{Alabi2012}. 


\section*{Acknowledgements} 
We would like to thank Tim Rockt\"aschel for answering various implementation questions, and M\'ario Figueiredo and Chris Dyer for helpful comments on a draft of this report. 
This work was partially supported by Funda\c{c}\~ao para a Ci\^{e}ncia e Tecnologia (FCT), through contracts  UID/EEA/50008/2013 and UID/CEC/50021/2013.

\bibliography{icml2016}
\bibliographystyle{icml2016}

\clearpage

\appendix
\onecolumn

\section{Supplementary Material}

\subsection{Proof of Prop.~\ref{prop:sparsemax_solution}}\label{sec:proof_threshold_closedform}

The Lagrangian of the optimization problem in Eq.~\ref{eq:sparsemax} is:
\begin{equation}
\sett{L}(\vectsymb{z}, \vectsymb{\mu}, \tau) = \frac{1}{2}\|\vectsymb{p}-\vectsymb{z}\|^2 - \DP{\vectsymb{\mu}}{\vectsymb{p}} + \tau (\DP{\vect{1}}{\vectsymb{p}} - 1).
\end{equation}
The optimal $(\vectsymb{p}^*, \vectsymb{\mu}^*, \tau^*)$ must satisfy the following 
Karush-Kuhn-Tucker conditions:
\begin{eqnarray}
\vectsymb{p}^* - \vectsymb{z} - \vectsymb{\mu}^* + \tau^*\vect{1} = \vect{0}\label{eq:proof_zero_lagrangian}, \\
\DP{\vect{1}}{\vectsymb{p}^*} = 1, \,\,\,\, \vectsymb{p}^* \ge \vect{0}, \,\,\,\, \vectsymb{\mu}^* \ge \vect{0},\label{eq:proof_primal_dual_feasibility}\\
\mu_i^* p_i^* = 0, \,\, \forall i \in [K].\label{eq:proof_complementary_slackness}
\end{eqnarray}
If for $i \in [K]$ we have $p_i^*>0$, then from Eq.~\ref{eq:proof_complementary_slackness} we must have 
$\mu_i^* = 0$, which from Eq.~\ref{eq:proof_zero_lagrangian} implies $p_i^* = z_i - \tau^*$. 
Let $S(\vectsymb{z})=\{j \in [K]\,\,|\,\, p_j^* > 0\}$. 
From Eq.~\ref{eq:proof_primal_dual_feasibility} we obtain 
$\sum_{j \in S(\vectsymb{z})} (z_j - \tau^*)=1$, 
which yields the right hand side of Eq.~\ref{eq:threshold_closedform}. 
Again from Eq.~\ref{eq:proof_complementary_slackness}, we have that $\mu_i^* > 0$ implies 
$p_i^*=0$, which from Eq.~\ref{eq:proof_zero_lagrangian} implies $\mu_i^* = \tau^* - z_i \ge 0$, i.e., 
$z_i \le \tau^*$ for $i \notin S(\vectsymb{z})$. Therefore we have that $k(\vectsymb{z}) = |S(\vectsymb{z})|$, which proves the first equality of 
Eq.~\ref{eq:threshold_closedform}.

%

\subsection{Proof of Prop.~\ref{prop:sparsemax_softmax_properties}}\label{sec:proof_prop_sparsemax_softmax_properties}


We start with the third property, which follows from the coordinate-symmetry in the definitions in Eqs.~\ref{eq:softmax}--\ref{eq:sparsemax}. The same argument can be used to prove the first part of the first property (uniform distribution). 

Let us turn to the second part of the first property (peaked distribution on the maximal components of $\vectsymb{z}$), and define $t=\epsilon^{-1}$. For the softmax case, this follows from 
\begin{equation}
\lim_{t\rightarrow +\infty} \frac{e^{tz_i}}{\sum_k e^{tz_k}} = \lim_{t\rightarrow +\infty} \frac{e^{tz_i}}{\sum_{k\in A(\vectsymb{z})} e^{tz_k}} = \lim_{t\rightarrow +\infty} \frac{e^{t(z_i-z_{(1)})}}{|A(\vectsymb{z})|}
= 
\left\{
\begin{array}{ll}
1/|A(\vectsymb{z})|, & \text{if $i \in A(\vectsymb{z})$}\\
0, & \text{otherwise.} 
\end{array}
\right.
\end{equation}
For the sparsemax case, we invoke Eq.~\ref{eq:threshold_closedform} and the fact that $k(t\vectsymb{z})=|A(\vectsymb{z})|$ if $\gamma(t\vectsymb{z})\ge 1/|A(\vectsymb{z})|$. 
Since $\gamma(t\vectsymb{z})=t\gamma(\vectsymb{z})$, the result follows.


The second property holds for softmax, since $(e^{z_i+c})/\sum_k e^{z_k+c} = e^{z_i}/\sum_k e^{z_k}$; and for sparsemax, since for any $\vectsymb{p} \in \Delta^{K-1}$ we have $\|\vectsymb{p} - \vectsymb{z} - c\vect{1}\|^2 = \|\vectsymb{p} - \vectsymb{z}\|^2 -2c\DP{\vect{1}}{(\vectsymb{p} - \vectsymb{z})} + \|c\vect{1}\|^2$, which equals $\|\vectsymb{p} - \vectsymb{z}\|^2$ plus a constant (because $\DP{\vect{1}}{\vectsymb{p}} = 1$).

Finally, let us turn to fourth property. The first inequality states that $z_i \le z_j \,\, \Rightarrow \,\, \rho_i(\vectsymb{z}) \le \rho_j(\vectsymb{z})$ (\emph{i.e.}, coordinate monotonicity). For the softmax case, this follows trivially from the fact that the exponential function is increasing. 
For the sparsemax, we use a proof by contradiction. 
Suppose $z_i \le z_j$ and $\mathrm{sparsemax}_i(\vectsymb{z}) > \mathrm{sparsemax}_j(\vectsymb{z})$. 
From the definition in Eq.~\ref{eq:sparsemax}, 
we must have $\|\vectsymb{p} - \vectsymb{z}\|^2 \ge \|\mathrm{sparsemax}(\vectsymb{z}) - \vectsymb{z}\|^2$, for any $\vectsymb{p} \in \Delta^{K-1}$. 
This leads to a contradiction if we choose $p_k = \mathrm{sparsemax}_k(\vectsymb{z})$ for $k \notin \{i,j\}$, $p_i = \mathrm{sparsemax}_j(\vectsymb{z})$, and 
$p_j = \mathrm{sparsemax}_i(\vectsymb{z})$. 
To prove the second inequality in the fourth property for softmax, we need to show that, with $z_i \le z_j$, we have $(e^{z_j}-e^{z_i})/\sum_k e^{z_k} \le (z_j - z_i)/2$. Since $\sum_k e^{z_k} \ge e^{z_j}+e^{z_i}$, it suffices to consider the binary case, \emph{i.e.},  we need to prove that $\tanh((z_j-z_i)/2) = (e^{z_j}-e^{z_i})/(e^{z_j}+e^{z_i}) \le (z_j - z_i)/2$, that is,
$\tanh(t) \le t$ for $t\ge 0$. This comes from $\tanh(0)=0$ and $\tanh'(t) = 1-\tanh^2(t) \le 1$. 
For sparsemax, given two coordinates $i$, $j$, three things can happen: 
(i) both are thresholded, in which case $\rho_j(\vectsymb{z})-\rho_i(\vectsymb{z})=z_j-z_i$; (ii) the smaller ($z_i$) is truncated, in which case $\rho_j(\vectsymb{z})-\rho_i(\vectsymb{z})=z_j-\tau(\vectsymb{z})\le z_j-z_i$; 
(iii) both are truncated, in which case $\rho_j(\vectsymb{z})-\rho_i(\vectsymb{z})=0\le z_j-z_i$.

\subsection{Proof of Prop.~\ref{prop:sparsemax_loss_properties}}\label{sec:proof_prop_sparsemax_loss_properties}

To prove the first claim, note that, for $j \in S(\vectsymb{z})$,
\begin{equation}
\frac{\partial \tau^2(\vectsymb{z})}{\partial z_j} = 2 \tau(\vectsymb{z}) \frac{\partial \tau(\vectsymb{z})}{\partial z_j}= \frac{2 \tau(\vectsymb{z})}{|S(\vectsymb{z})|},
\end{equation} 
where we used Eq.~\ref{eq:gradient_tau}. We then have
\begin{eqnarray}
\frac{\partial L_{\mathrm{sparsemax}}(\vectsymb{z}; k)}{\partial z_j} = 
\left\{\!\!
\begin{array}{ll} 
-{\delta}_{k}(j) + z_j - \tau(\vectsymb{z}) & \text{if $j \in S(\vectsymb{z})$}\\
-{\delta}_{k}(j) & \text{otherwise}.
\end{array}
\right.\nonumber
\end{eqnarray} 
That is,
$\nabla_{\vectsymb{z}} L_{\mathrm{sparsemax}}(\vectsymb{z};k) = 
-\vectsymb{\delta}_k + {\mathrm{sparsemax}(\vectsymb{z})}$. 

To prove the second statement, from the expression for the Jacobian in Eq.~\ref{eq:sparsemax_gradient}, we have that the Hessian of $L_{\mathrm{sparsemax}}$ 
(strictly speaking, a ``sub-Hessian'' \citep{Penot2014}, since the loss is not twice-differentiable everywhere) is given by
\begin{eqnarray}\label{eq:sparsemax_loss_hessian}
\frac{\partial^2 L_{\mathrm{sparsemax}}(\vectsymb{z}; k)}{\partial x_i \partial x_j} = 
\left\{
\begin{array}{ll}
\delta_{ij}-\frac{1}{|S(\vectsymb{z})|} & \text{if $i,j\in S(\vectsymb{z})$}\\
0 & \text{otherwise.}
\end{array}
\right.
\end{eqnarray}
This Hessian can be written in the form $\mathrm{Id} - \vect{1}\vect{1}^{\top}/|S(\vectsymb{z})|$ up to padding zeros (for the coordinates not in $S(\vectsymb{z})$); hence it is positive semi-definite (with rank $|S(\vectsymb{z})|-1$), 
which establishes the convexity of $L_{\mathrm{sparsemax}}$. 

For the third claim, we have $L_{\mathrm{sparsemax}}(\vectsymb{z}+c\vect{1})= -z_k - c + \frac{1}{2} \sum_{j \in S(\vectsymb{z})} (z_j^2 - \tau^2(\vectsymb{z}) +2c(z_j-\tau)) + \frac{1}{2}
= -z_k - c + \frac{1}{2} \sum_{j \in S(\vectsymb{z})} (z_j^2 - \tau^2(\vectsymb{z}) +2c p_j) + \frac{1}{2} =L_{\mathrm{sparsemax}}(\vectsymb{z})$, since $\sum_{j \in S(\vectsymb{z})} p_j = 1$. 

From the first two claims, we have that the minima of $L_{\mathrm{sparsemax}}$ have zero gradient, i.e., satisfy the equation 
${\mathrm{sparsemax}(\vectsymb{z})} = \vectsymb{\delta}_k$.  Furthemore, 
from Prop.~\ref{prop:sparsemax_softmax_properties}, we have that the sparsemax never increases the distance between two coordinates, i.e., $\mathrm{sparsemax}_k(\vectsymb{z}) - \mathrm{sparsemax}_j(\vectsymb{z}) \le z_k - z_j$. 
Therefore ${\mathrm{sparsemax}(\vectsymb{z})} = \vectsymb{\delta}_k$
implies $z_k \ge 1+\max_{j \ne k} z_j$. 
To prove the converse statement, note that the distance above can only be decreased
if the smallest coordinate is truncated to zero. 
This establishes the equivalence between (ii) and (iii) in the fifth claim. 
Finally, we have that the minimum loss value is achieved when $S(\vectsymb{z}) = \{k\}$, in which case $\tau(\vectsymb{z}) = z_k - 1$, leading to
\begin{eqnarray}
L_{\mathrm{sparsemax}}(\vectsymb{z}; k) = -z_k + \frac{1}{2}  (z_k^2 - (z_k-1)^2) + \frac{1}{2}=0.
\end{eqnarray}
This proves the equivalence with (i) and also the fourth claim. 

\end{document}